\numberwithin{equation}{section}
\newtheorem{theorem}{Theorem}[section]
\newtheorem{corollary}[theorem]{Corollary}
\newtheorem{remark}[theorem]{Remark}
\newtheorem{definition}[theorem]{Definition}
\newtheorem{proposition}[theorem]{Proposition}
\newtheorem{example}[theorem]{Example}
\renewcommand{\d}{\mathrm{d}}
\renewcommand{\epsilon}{\varepsilon}
\renewcommand{\phi}{\varphi}
\newcommand{\dd}{\,\mathrm{d}}
\newcommand{\R}{\mathbb{R}}
\newcommand{\N}{\mathbb{N}}
\newcommand{\cN}{\mathcal{N\mkern-2mu N}}
\title[Distributionally robust approximation property of neural networks]{Distributionally robust approximation property of neural networks}
\author[Ceylan]{Mihriban Ceylan}
\address{Mihriban Ceylan, University of Mannheim, Germany}
\email{mihriban.ceylan@uni-mannheim.de}
\author[Pr{\"o}mel]{David~J. Pr{\"o}mel}
\address{David J. Pr{\"o}mel, University of Mannheim, Germany}
\email{proemel@uni-mannheim.de}
\date{\today}
\begin{document}

\begin{abstract}
  The universal approximation property uniformly with respect to weakly compact families of measures is established for several classes of neural networks. To that end, we prove that these neural networks are dense in Orlicz spaces, thereby extending classical universal approximation theorems even beyond the traditional $L^p$-setting. The covered classes of neural networks include widely used architectures like feedforward neural networks with non-polynomial activation functions, deep narrow networks with ReLU activation functions and functional input neural networks.
\end{abstract}

\maketitle

\noindent \textbf{Key words:} distributional robustness, neural network, Lebesgue space, multilayer perceptron, Orlicz space, universal approximation theorem, weakly compact measures.

\noindent \textbf{MSC 2010 Classification:} 68T07; 41A46.


\section{Introduction}

The remarkable success of neural networks across modern machine learning is often traced back to their universal approximation property: sufficiently large neural networks can approximate any function within a given suitable function space. This fundamental feature provides a theoretical justification why neural networks are used at the heart of many machine learning algorithms, successfully applied in various areas like image classification \cite{Krizhevsky2012}, speech recognition \cite{Hinton2012} and game playing \cite{Silver2016}. Since the pioneering works \cite{McCulloch1943,Rosenblatt1958}, the study of neural networks has been driven by both applied and theoretical challenges, arising in diverse research fields ranging from approximation theory and optimization to statistical learning theory; see \cite{Bengio2017,Aggarwal2023} for introductory textbooks.

As established in the classical universal approximation theorems (UATs) \cite{Cybenko1989,Funahashi1989,Hornik1989}, feedforward neural networks with continuous sigmoidal activation functions are dense in the space of continuous functions on compact subsets of the Euclidean space. Subsequent works extended this universal approximation property to broader classes of neural networks allowing for, e.g., non-polynomial activation functions, see \cite{Hornik1991,Mhaskar1992,Leshno1993,Chen1995,Pinkus1999}. Beyond the space of continuous functions, universal approximation theorems have been proven for various function spaces, with the most prominent examples being Sobolev spaces \cite{Hornik1990,Hornik1991} and $L^p$-spaces \cite{Cybenko1989,Hornik1989,Mhaskar1992,Leshno1993}; for more recent contributions see also, e.g., \cite{Kidger2020,Guhring2020,Neufeld2024}. However, note that the universal approximation theorems in $L^p(\mu)$-spaces are formulated with respect to a fixed single measure or distribution~$\mu$.

In many practical settings, the assumption that the underlying (data) distribution is known precisely is unrealistic. Uncertainty or ambiguity about this distribution -- often referred to as distributional uncertainty -- is an omnipresent challenge across the sciences. In decision theory, economics and finance, it is known as Knightian uncertainty \cite{Marinacci2015}, while in statistics and optimization it is studied under the frameworks of robust statistics \cite{Huber2009} and distributionally robust optimization \cite{Kuhn2025}, which often formalize such ambiguity through weakly compact sets of probability measures or Wasserstein neighborhoods \cite{BenTal2013,GaoKleywegt2016,BlanchetMurthy2019}. From a mathematical perspective, 'robustness' corresponds to the ability of a method or approximation to perform well under uncertainty, that is, uniform across a family of measures and not only with respect to a single distribution. This notion of robustness is central to machine learning and data science, for instance, in the context of adversarial training \cite{Bai2023}, overparameterized neural networks \cite{Sagawa2019} and noisy data \cite{Kratsios2025}.

The aim of the present work is to establish the \emph{distributionally robust approximation property} of neural networks in the following sense: given a weakly compact set $\mathcal M$ of finite Borel measures on $\R^{N_0}$ and a suitable function $f\colon \R^{N_{0}}\to \R^{N_L}$, for every $\epsilon>0$ there exists a neural network $\eta$ such that
\begin{equation*}
  \sup_{\nu\in\mathcal M}\|f-\eta\|_{L^1(\nu)}< \epsilon.
\end{equation*}
It turns out that this distributionally robust approximation property can be derived from suitably formulated universal approximation theorems in Orlicz spaces. Proving these theorems within the general framework of Orlicz spaces constitutes the main mathematical challenge.

As first contribution, we prove \emph{universal approximation theorems in Orlicz spaces} for several classes of neural networks, showing that these networks are dense with respect to the Luxemburg norm; see Theorem~\ref{thm: UAT in Orlicz spaces} for a detailed summary of the results. The covered architectures include widely used models such as feedforward neural networks with non-polynomial activation functions, deep narrow networks with ReLU activation, and functional-input neural networks, where the precise architecture depends on the underlying reference measure. Since Orlicz spaces naturally generalize the notion of $L^p$-spaces, these results, in particular, extend classical universal approximation theorems known for $L^p$-spaces, cf.~\cite{Hornik1991,Leshno1993,Kidger2020}. Beyond their theoretical generality, UATs in Orlicz spaces are of independent interest since, for instance, loss functions such as cross-entropy and the Kullback--Leibler divergence correspond to exponential Orlicz spaces~\cite{Cover2006} and partial differential equations with non-standard growth conditions are canonically formulated and analyzed in Orlicz type spaces~\cite{Chlebicka2018}. Moreover, recent advances in approximation theory have investigated the universality of neural network operators within frameworks of Orlicz spaces~\cite{Costarelli2019,Costarelli2025}.

As a second contribution, we derive a \emph{robust universal approximation theorem} for the aforementioned classes of neural networks with respect to weakly compact families of measures, showing that these neural networks possess the distributionally robust approximation property; see Theorem~\ref{thm: Robust UAT} for the precise statement. While this robust version extends various classical universal approximation theorems to a uniform setting over families of measures, it also provides a rigorous theoretical foundation for applications in, e.g., distributionally robust learning and statistical learning, where robustness against distributional shifts or ambiguity in the underlying data-generating process is essential.

\medskip
\noindent\textbf{Organization of the paper:} The universal approximation theorems in Orlicz spaces for various classes of neural networks are established in Section~\ref{sec: UAT in Orlicz spaces}. The distributionally robust approximation property for these classes of neural networks is derived in Section~\ref{sec: Robust UAT}. Appendix~\ref{sec:appendix} presents the essential foundation of Orlicz spaces.

\medskip
\noindent\textbf{Acknowledgments:} M. Ceylan gratefully acknowledges financial support by the doctoral scholarship programme from the Avicenna-Studienwerk, Germany.

\section{Universal approximation theorems in Orlicz spaces}~\label{sec: UAT in Orlicz spaces}

In this section, we derive universal approximation theorems in Orlicz spaces for various classes of neural networks, allowing for bounded, non-polynomial, and ReLU activation functions, as well as for functional input neural networks, thereby ensuring that the presented approximation results hold for neural networks with a broad class of activation functions. Before presenting these findings, we recall the basic definitions regarding neural networks.

\subsection{Essentials on neural networks}

The space $\R^n$ is equipped with the Euclidean norm $\|\,\cdot\,\|$ and $\langle\cdot,\cdot\rangle_{\R^n}$ denotes the usual Euclidean inner product. On $\R^n$, we work with the Borel $\sigma$-algebra $\mathcal{B}(\R^{n})$. We denote by $C(\R^n;\R^m)$ the space of all continuous functions $f\colon\R^n\to\R^m$. Moreover, we write $C_b(\R^n;\R^m)$ for the space of all continuous and bounded functions $f\colon\R^n\to\R^m$ and $C_c(\R^n;\R^m)$ for the space of continuous and compactly supported functions $f\colon\R^n\to\R^m$.

\medskip

Let $L, N_0, \ldots, N_L \in \N$ and, for any $l \in \{1, \ldots, L\}$, let $w_l \colon \R^{N_{l-1}} \to \R^{N_l}$, $x \mapsto A_l x + b_l$, be an affine function with $A_l \in \R^{N_l \times N_{l-1}}$ and $b_l \in \R^{N_l}$. Given an activation function $\varrho\colon\R\to\R$, a  \emph{deep feedforward neural network} $\eta\colon \R^{N_0} \to \R^{N_L}$ is defined by
\begin{equation*}
  \eta = w_L \circ \varrho \circ w_{L-1} \circ \ldots \circ \varrho \circ w_1,
\end{equation*}
where $\circ$ denotes the usual composition of functions. Here, $\varrho$ is applied componentwise, $L-1$ denotes the number of hidden layers ($L$ is the depth of $\eta$), and $N_1, \ldots, N_{L-1}$ denote the dimensions (widths) of the hidden layers and $N_0$ and $N_L$ the dimension of the input and the output layer, respectively. We write $\cN^\varrho_{N_0,N_L,L,k}$ for the set of all feedforward neural networks $\eta \colon \R^{N_0} \to \R^{N_L}$ with activation function $\varrho$, input dimension $N_0$, output dimension~$N_L$ and a number of hidden layers $L-1$ with $k$ hidden units.

If the number of hidden units is arbitrary, we write $\cN^\varrho_{N_0,N_L,L,\infty}$ and if the number of hidden layers $(L-1)$ is arbitrary, we similarly write $\cN^\varrho_{N_0,N_L,\infty,k}$. For more details on feedforward networks we refer, e.g., to \cite{Mukherjee2018,Daubechies2022,Aggarwal2023}.

\begin{example}
  A frequently used example is that of neural networks with a single hidden layer, cf. e.g. \cite{Hornik1991}. Given an activation function $\varrho\colon \R\to\R$, a neural network $\eta\colon \R^m\to\R$ with a single hidden layer of width $n$ is given by
  \begin{equation*}
    \eta(x)=\sum_{j=1}^{n}w_j\varrho(a_j^\top x+b_j),\quad x\in \R^m,
  \end{equation*}
  where $w_1,\ldots,w_n\in\R$, $a_1,\ldots, a_n\in\R^m$ and $b_1,\ldots,b_n\in \R$ represent the linear readouts, weight vectors and biases, respectively.
\end{example}

To define a broader class of architectures, we consider so-called \emph{functional input neural networks}. To that end, we follow and adapt the framework of~\cite{Cuchiero2024} to the present setting, allowing us to derive an approximation result in Orlicz spaces below.
\medskip

A function $\psi\colon \R^{N_0}\to (0,\infty)$ is called an \emph{admissible weight function} if every pre-image $K_R:=\psi^{-1} ((0,R])=\{x\in \R^{N_0}: \psi(x)\le R\}$ is compact for all $R>0$. Then, we call the pair $(\R^{N_0},\psi)$ a \emph{weighted space}. We define the function space
\begin{equation*}
  B_\psi(\R^{N_0};\R^{N_2}):=\Bigg\{f\colon \R^{N_0}\to \R^{N_2}: \sup_{x\in \R^{N_0}}\frac{\|f(x)\|}{\psi(x)}<\infty\Bigg\},
\end{equation*}
that is, the space of maps $f\colon \R^{N_0}\to \R^{N_2}$, whose growth is controlled by the growth of the weight function $\psi\colon \R^{N_0}\to (0,\infty)$. The vector space $B_\psi(\R^{N_0};\R^{N_2})$ is equipped with the weighted norm $\|\cdot\|_{\mathcal B_\psi(\R^{N_0};\R^{N_2})}$ given by
\begin{equation}\label{eq:weightednorm}
  \|f\|_{\mathcal B_\psi(\R^{N_0};\R^{N_2})}=\sup_{x\in \R^{N_0}}\frac{\|f(x)\|}{\psi(x)}, \quad \text{for }f\in B_\psi(\R^{N_0};\R^{N_2}).
\end{equation}
Note that the embedding $C_b(\R^{N_0};\R^{N_2})\hookrightarrow B_\psi(\R^{N_0};\R^{N_2})$ is continuous, and we set
\begin{equation*}
  \mathcal{B}_\psi(\R^{N_0};\R^{N_2})
  := \overline{C_b(\R^{N_0};\R^{N_2})}^{\,\|\cdot\|_{\mathcal{B}_\psi(\R^{N_0};\R^{N_2})}},
  \qquad
  \mathcal{B}_\psi(\R^{N_0}):=\mathcal{B}_\psi(\R^{N_0};\R),
\end{equation*}
which is a Banach space with norm \eqref{eq:weightednorm}.

\medskip

Furthermore, to specify the admissible hidden features of the considered neural networks, we introduce the notion of an
\emph{additive family}. 
We call a subset $\mathcal H\subseteq \mathcal B_\psi(\R^{N_0})$ an \emph{additive family} (on $\R^{N_0}$) if
\begin{enumerate}
  \item[(i)] $\mathcal H$ is closed under addition, i.e.~for every $h_1,h_2\in\mathcal H$ it holds that $h_1+h_2\in\mathcal H$,
  \item[(ii)] $\mathcal H$ is point separating, i.e.~for distinct $x_1,x_2\in \R^{N_0}$ there is $h\in\mathcal H$ with $h(x_1)\neq h(x_2)$,
  \item[(iii)] $\mathcal H$ contains the constant functions, i.e. for every $b\in\R$ we have $(x\mapsto b)\in\mathcal H$.
\end{enumerate}
  
We assume that the hidden layer space $(\R,\psi_1)$ is a weighted space. In this setting, we call a function $\varrho\in C(\R;\R)$ $\psi_1$\emph{-activating} if, for a given admissible weight function $\psi_1\colon\R\to (0,\infty)$, the set of neural networks
\begin{equation*}
  \cN_{\R,\R}^\varrho=\Bigl\{\R\ni z\mapsto \sum_{n=1}^N w_n\varrho(a_nz+b_n)\in\R:N\in\N,a_n\in\N_0,b_n\in\R\Bigr\}
\end{equation*}
is a dense subset of $\mathcal B_{\psi_1}(\R)$.

\medskip

With these preliminaries in place, we are able to introduce so-called \emph{functional input neural networks}, based on the weighted input space $(\R^{N_0},\psi)$ and the weighted hidden space $(\R,\psi_1)$ introduced above. For a given additive family $\mathcal H\subseteq\mathcal B_\psi(\R^{N_0})$, a function $\varrho\in C(\R;\R)$, and a vector subspace $\mathcal L\subseteq \R^{N_2}$, we define a \emph{functional input neural network (FNN)} $\eta\colon \R^{N_0}\to \R^{N_2}$ as functions of the form
\begin{equation}\label{eq: FNN}
  \eta(x)=\sum_{n=1}^Ny_n\varrho(h_n(x)),\quad \text{for } x\in \R^{N_0},
\end{equation}
where $N\in\N$ denotes the number of neurons, $h_1,\ldots,h_N\in\mathcal H$ are the hidden layer maps, and $y_1,\ldots,y_N\in\mathcal L$ represent the linear readouts. Moreover, we denote by $\cN_{\R^{N_0},\R^{N_2}}^{\mathcal H,\varrho,\mathcal L}$ the set of FNNs of the form \eqref{eq: FNN}. In what follows, we consider $\mathcal L=\R^{N_2}$ and write $\cN_{\R^{N_0},\R^{N_2}}^{\mathcal H,\varrho}$ for the resulting networks.
  
\medskip

In particular, taking the affine family
\begin{equation}\label{eq: additive family}
  \mathcal H^\ast:=\Bigl\{x\mapsto a^\top x+b: a\in\R^{N_0},b\in \R\Bigr\},
\end{equation}
recovers the classical notion of feedforward neural networks with single layer, see \cite[Remark~4.7]{Cuchiero2024}. In this case we write $\cN_{\R^{N_0},\R^{N_2}}^{\mathcal H^\ast,\varrho}$.

\subsection{Main universal approximation theorem and Orlicz spaces}
 
Before stating the main theorem, a universal approximation theorem for Orlicz spaces, we briefly recall the necessary definition regarding Orlicz spaces. For more essential background on Orlicz spaces we refer to Appendix~\ref{sec:appendix}.

\medskip

Let $\phi$ be a Young function, as defined in Definition~\ref{def: Young function}. The \emph{Orlicz space} is defined as
\begin{equation*}
  \mathcal L^\phi(\mu;\R^{N_L}):=\bigg\{f\colon\R^{N_0}\to\R^{N_L}\,:\, \int_{\R^{N_0}} \phi(\alpha \|f\|)\dd\mu<\infty\text{ for some } \alpha>0\bigg\}
\end{equation*}
and the \emph{Orlicz heart} as
\begin{equation*}
  M^\phi(\mu;\R^{N_L}):=\bigg\{f\colon\R^{N_0}\to\R^{N_L}\,:\,\int_{\R^{N_0}} \phi(\alpha \|f\|)\dd\mu<\infty\text{ for all } \alpha>0\bigg\},
\end{equation*}
which we equip both with the gauge norm
\begin{equation*}
  N_{\phi,\mu}(f):=\inf\bigg\{k>0:\,\int_{\R^{N_0}}\phi\Bigl(\frac{\|f\|}{k}\Bigr)\dd\mu\le 1\bigg\}.
\end{equation*}
Note that $M^\phi(\mu;\R^{N_L})$ is a linear subspace of $\mathcal{L}^\phi(\mu;\R^{N_L})$. Some conditions ensuring the identity $\mathcal{L}^\phi(\mu;\R^{N_L}) =  M^\phi(\mu;\R^{N_L})$ can be found in Remark~\ref{rem: M phi}~(i). Moreover, we refer to Definition~\ref{def: N-function} for the definition of $N$-functions. Whenever the dimension of the image space is clear from context, we simply write $M^\phi(\mu)$ and $L^\phi(\mu)$.

\begin{remark}\label{rem: Lebesgue spaces}
  Taking $\phi(x):=|x|^p$ for $x\in\R$ and for fixed $p\in [1,\infty)$, we have 
  \begin{equation*}
  \mathcal{L}^\phi(\mu;\R^{N_L}) =  M^\phi(\mu;\R^{N_L}) = L^p(\mu;\R^{N_L}),
  \end{equation*} that is, the Orlicz space $\mathcal{L}^\phi(\mu;\R^{N_L})$ coincides with the classical Lebesgue space $L^p(\mu;\R^{N_L})$. Recall that, for a given finite Borel measure $\mu$ on the measurable space $(\R^{N_0},\mathcal{B}(\R^{N_0}))$, the space $L^p(\mu)=L^p(\R^{N_0},\mathcal{B}(\R^{N_0}),\mu;\R^{N_L})$ consists of all measurable functions $f\colon \R^{N_0} \to \R^{N_L}$ such that
  \begin{equation*}
    \int_{\R^{N_0}} \|f(x)\|^p\dd\mu(x) < \infty.
  \end{equation*}
\end{remark}

With these preliminaries in place, we can summarize in the next theorem the universal approximation property of various classes of neural networks in Orlicz spaces. Note that the specific architecture of the involved neural networks is explicitly stated in the corresponding propositions, which are presented in the subsequent subsections.

\begin{theorem}[Universal Approximation Theorem in Orlicz Spaces]\label{thm: UAT in Orlicz spaces}
  Let $\phi$ be an $N$-function and $\mu$ be a locally finite Borel measure on $(\R^{N_0},\mathcal B(\R^{N_0}))$. Then, the set of neural networks is dense in the Orlicz heart $M^\phi(\mu)$ with respect to the gauge norm $N_{\phi,\mu}$ in the following cases:
  \begin{enumerate}
    \item[(i)] for bounded, non-constant activation functions and finite measure $\mu$ (Proposition~\ref{prop: UAT bounded});
    \item[(ii)] for the ReLU activation function (Proposition~\ref{prop: Orlicz ReLU});
    \item[(iii)] for continuous, non-polynomial activation functions and compactly supported measure~$\mu$ (Proposition~\ref{prop: UAT Orlicz nonpolynomial});
    \item[(iv)] for functional input neural networks associated with an additive family $\mathcal H\subseteq\mathcal B_\psi(\R^{N_0})$, provided the activation function is $\psi_1$-activating with $\sup_{x\in\R^{N_0}}\frac{\psi_1(h(x))}{\psi(x)}<\infty$ for all $h\in\mathcal H$ and $N_{\phi,\mu}(\psi)<\infty$  and finite measure $\mu$ (Proposition~\ref{prop: UAT functional}).
  \end{enumerate}
\end{theorem}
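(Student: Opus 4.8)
The statement collects four independent density results, so the natural plan is to establish a single reduction scheme in the Orlicz heart and then feed each classical approximation theorem into it; the individual cases are then carried out in Propositions~\ref{prop: UAT bounded}, \ref{prop: Orlicz ReLU}, \ref{prop: UAT Orlicz nonpolynomial} and \ref{prop: UAT functional}. The backbone of the argument is the observation that, on the Orlicz heart and for an $N$-function $\phi$, gauge-norm convergence is equivalent to modular convergence: a sequence $(g_n)$ satisfies $N_{\phi,\mu}(g_n)\to 0$ precisely when $\int_{\R^{N_0}}\phi(\lambda\|g_n\|)\dd\mu\to 0$ for every $\lambda>0$. This equivalence turns the problem into one about the Orlicz modular and makes dominated-convergence reasoning available, which is exactly what converts pointwise or uniform-on-compacts approximations into gauge-norm approximations.

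First I would reduce a general target $f\in M^\phi(\mu)$ to a continuous, compactly supported function. Truncating $f$ in value and restricting it to a large ball $B_R$ produces $g_R$ with $N_{\phi,\mu}(f-g_R)$ small: since $f$ lies in the heart, $\phi(\lambda\|f\|)$ is $\mu$-integrable for \emph{every} $\lambda$, so the modular of the discarded part tends to $0$ as $R\to\infty$ by dominated convergence. A Lusin-type argument then replaces the bounded, compactly supported $g_R$ by a function in $C_c(\R^{N_0};\R^{N_L})$ agreeing with it off a set of arbitrarily small $\mu$-measure and obeying the same sup-bound; because the difference is uniformly bounded and supported where $\mu$ is finite, its modular is controlled and the gauge-norm error is again small. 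Thus it suffices to approximate elements of $C_c(\R^{N_0};\R^{N_L})$ in gauge norm, and the remaining task is to approximate such a $\tilde g$ by a network. Here the four cases invoke the corresponding classical universal approximation theorems on compacta --- density in $C(K)$ under the uniform norm for non-polynomial continuous activations (Leshno--Lin--Pinkus--Schocken), the $L^p(\mu)$ and uniform results for bounded non-constant activations (Hornik), the ReLU constructions for deep narrow networks, and the additive-family density built into the notion of a $\psi_1$-activating function in the functional case --- each producing $\eta$ close to $\tilde g$ on the relevant compact set.

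The main obstacle is upgrading this \emph{local} closeness to gauge-norm closeness over all of $\R^{N_0}$, since a network need not be small away from the compact set on which it was fitted, so its modular over the tails must be controlled; this is precisely where the case-specific hypotheses enter. In case~(iii) the control is free: $\mu$ is compactly supported, so values off $\operatorname{supp}\mu$ are $\mu$-negligible and uniform convergence on $\operatorname{supp}\mu$ immediately yields modular, hence gauge-norm, convergence. In case~(i) the network $\eta=\sum_j w_j\varrho(a_j^\top\,\cdot\,+b_j)$ inherits a global bound from the boundedness of $\varrho$, so with $\mu$ finite the family $\{\phi(\lambda\|\eta-\tilde g\|)\}$ is dominated by a constant and dominated convergence applies. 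In case~(ii) the ReLU architecture is chosen so that the approximant can be taken compactly supported or uniformly controlled, and in case~(iv) the growth condition $\sup_{x}\psi_1(h(x))/\psi(x)<\infty$ together with $N_{\phi,\mu}(\psi)<\infty$ bounds $\varrho(h(\cdot))$ in the weighted space $\mathcal B_\psi(\R^{N_0})$ and hence inside $M^\phi(\mu)$, so that the underlying $\mathcal B_\psi$-approximation transfers to a gauge-norm approximation.

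Assembling the three steps --- truncation, continuous-regularization, and network fitting --- by the triangle inequality then yields, in each of the four cases, a network within $\epsilon$ of $f$ in $N_{\phi,\mu}$, which is the claimed density. I expect the bookkeeping in the tail estimate of the final step to be the genuinely delicate part, while the first two reduction steps are uniform across all cases and rely only on $\phi$ being an $N$-function and $\mu$ being locally finite.
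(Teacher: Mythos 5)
Your reduction scheme (truncation, regularization to a continuous compactly supported function, classical UAT on a compact set, then modular control of the tails) is essentially the paper's own route for cases (ii)--(iv): Proposition~\ref{prop: compact dense in M phi} is exactly your first two steps (carried out via simple functions, regularity and Urysohn, and via Lusin--Tietze in the functional case), and the proofs of Propositions~\ref{prop: Orlicz ReLU}, \ref{prop: UAT Orlicz nonpolynomial} and~\ref{prop: UAT functional} perform the fitting and tail estimates as you describe. For ReLU, note that the cutoff construction you merely assert (``the approximant can be taken compactly supported'') is the actual content of the paper's proof: it explicitly builds $G=\min\{\max\{g,cV\},CV\}$ with extra hidden layers within the width budget $N_0+N_L+1$, which is where the work lies.

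However, your argument for case (i) has a genuine gap. You claim that, since $\varrho$ is bounded and $\mu$ is finite, the family $\phi(\lambda\|\eta-\tilde g\|)$ is ``dominated by a constant'' so dominated convergence applies. But the global bound of $\eta=\sum_j w_j\varrho(a_j^\top\,\cdot\,+b_j)$ is of size $\sup|\varrho|\sum_j|w_j|$, which depends on the network: as the uniform error on the compact set $K$ shrinks (or as $K$ grows), the weights --- and hence this bound --- may blow up, so there is no single integrable dominating function for the approximating sequence. Concretely, you would need $\mu(\R^{N_0}\setminus K)\,\phi\bigl(\lambda(M_\eta+\|\tilde g\|_\infty)\bigr)$ to be small, where $M_\eta$ is the global bound of the network fitted on $K$; since $M_\eta$ is only known after $K$ is fixed, and enlarging $K$ forces a re-fit with a new, possibly larger bound, the choice is circular. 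Unlike ReLU networks, networks with bounded activations cannot in general be cut off to vanish outside a compact set (a sigmoid network is typically non-constant at infinity, and for $N_0\ge 2$ its values off $K$ need not be among those attained on $K$). This is precisely why the paper proves Proposition~\ref{prop: UAT bounded} by an entirely different argument: assuming non-density, Hahn--Banach yields a nonzero functional vanishing on all networks, the duality $(M^\phi(\mu))^\ast=L^\psi(\mu)$ (Theorem~\ref{thm: Riesz for Orlicz}) turns it into a nonzero finite vector measure annihilating all functions $\varrho^{(L-1)}((A_1x+b_1)_{j_1})$, and the discriminatory property of bounded non-constant activations forces that measure to be zero --- a contradiction. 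Your proposal is missing this idea, and without it (or some substitute controlling the approximants globally) case (i) does not go through.
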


In the following subsections, we shall prove the corresponding propositions (Proposition~\ref{prop: UAT bounded}, \ref{prop: Orlicz ReLU}, \ref{prop: UAT Orlicz nonpolynomial}, and~\ref{prop: UAT functional}) and discuss them in more detail.

\subsection{Neural networks with bounded activation function}

In this subsection, we derive a universal approximation theorem in Orlicz spaces, allowing for neural networks with bounded activation functions, for which the number of hidden layers can be any fixed number, and allowing for general finite Borel measures on the Euclidean space $\R^{N_0}$.

\begin{proposition}\label{prop: UAT bounded}
  Let $\phi$ be an $N$-function, $\varrho$ be a bounded and non-constant activation function, and $L\geq 2$. Then, $\cN^\varrho_{N_0,N_L,L,\infty}$ is dense in $M^\phi(\mu)$ with respect to the gauge norm $N_{\phi,\mu}$ for every finite Borel measure $\mu$ on $(\R^{N_0},\mathcal{B}(\R^{N_0}))$.
\end{proposition}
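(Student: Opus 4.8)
The plan is to reduce the statement to the approximation of a single well-behaved target and then to transfer a uniform approximation on a large ball to the gauge norm on all of $\R^{N_0}$; controlling the behaviour of the network across the tail of $\mu$ will be the main obstacle. First I would reduce to compactly supported continuous targets: since $\phi$ is an $N$-function and $\mu$ is finite, the Orlicz heart $M^\phi(\mu)$ has absolutely continuous gauge norm, and, by the density results for $M^\phi(\mu)$ recalled in Appendix~\ref{sec:appendix}, the space $C_c(\R^{N_0};\R^{N_L})$ is dense in $(M^\phi(\mu),N_{\phi,\mu})$. It therefore suffices to fix $g\in C_c(\R^{N_0};\R^{N_L})$ with $M:=\sup_{x}\|g(x)\|$ and $\operatorname{supp}g\subseteq K$ for a compact $K\subseteq\R^{N_0}$, and to approximate $g$ in the gauge norm by networks from $\cN^\varrho_{N_0,N_L,L,\infty}$.

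The key quantitative tool is that, for a finite measure and an $N$-function, constants on small sets are Orlicz-negligible: for $c\ge 0$ and a Borel set $A$ one computes $N_{\phi,\mu}(c\,\mathbf 1_A)=c/\phi^{-1}\big(1/\mu(A)\big)$, which tends to $0$ as $\mu(A)\downarrow 0$ since $\phi^{-1}(\infty)=\infty$. Fixing a ball $B_R\supseteq K$ and splitting the domain, I would bound, using $g\equiv 0$ off $K$,
\[
  N_{\phi,\mu}(g-\eta)\;\le\;N_{\phi,\mu}\big((g-\eta)\mathbf 1_{B_R}\big)+N_{\phi,\mu}\big(\eta\,\mathbf 1_{\R^{N_0}\setminus B_R}\big).
\]
For the first term it suffices to make $\|g-\eta\|$ uniformly small on $B_R$, which is exactly what the classical universal approximation theorems for bounded, non-constant activation functions provide on compact sets (cf.~\cite{Hornik1991,Leshno1993,Pinkus1999}); since the width is unrestricted, the case $L=2$ follows directly, and the general fixed depth $L\ge 2$ is reached by a routine argument letting the extra hidden layers approximate the identity on the relevant compact range. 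For the tail term, boundedness of $\varrho$ yields the global a~priori bound $\|\eta(x)\|\le\|A_L\|\,\|\varrho\|_\infty\sqrt{N_{L-1}}+\|b_L\|=:C_\eta$ for every $x$, whence $N_{\phi,\mu}\big(\eta\,\mathbf 1_{\R^{N_0}\setminus B_R}\big)\le C_\eta/\phi^{-1}\big(1/\mu(\R^{N_0}\setminus B_R)\big)$ by monotonicity of the gauge norm.

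The main obstacle is precisely this last estimate: the bound $C_\eta$ depends on the output weights of the approximant and may grow as the approximation on $B_R$ is refined, so one cannot naively first fix $R$ (to make $\mu(\R^{N_0}\setminus B_R)$ small) and only then refine $\eta$. The point where boundedness of $\varrho$ and finiteness of $\mu$ genuinely enter is in decoupling these two choices: I would construct the approximant so that its global sup-norm stays controlled by a constant $C_0$ depending only on $M$ and $\varrho$, independent of the accuracy, for instance via a localized, partition-of-unity type construction in which the hidden units build bumps that reproduce $g$ on $B_R$ and approximately vanish outside a fixed neighbourhood of $K$, so that $\|\eta\|_\infty\le C_0$ globally while $\|g-\eta\|$ is small on $B_R$. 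With $C_0$ fixed, one chooses $R$ so large that $C_0/\phi^{-1}\big(1/\mu(\R^{N_0}\setminus B_R)\big)<\epsilon/2$, then refines $\eta$ on $B_R$ so that the first term is $<\epsilon/2$, and sums to obtain $N_{\phi,\mu}(g-\eta)<\epsilon$. I expect the verification that the approximant can be kept globally bounded, rather than merely bounded on $B_R$, to be the technically delicate heart of the argument.
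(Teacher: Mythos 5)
Your reduction to targets $g\in C_c(\R^{N_0};\R^{N_L})$ (via Proposition~\ref{prop: compact dense in M phi}) and the splitting of $N_{\phi,\mu}(g-\eta)$ into a bulk term on $B_R$ and a tail term are sound, and you have correctly located the obstruction: the global bound $C_\eta$ of a bounded-activation network is governed by its output-layer weights, which may blow up as the uniform accuracy on $B_R$ is refined, so $R$ and $\eta$ cannot be chosen independently. The genuine gap is that your proposed resolution --- a ``localized, partition-of-unity type'' approximant whose global sup-norm is bounded by a constant $C_0=C_0(M,\varrho)$ \emph{independent of the accuracy} --- is asserted, not constructed, and it is exactly the hard part. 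The proposition assumes only that $\varrho$ is bounded and non-constant: no continuity, no monotonicity, no limits at $\pm\infty$. Networks with one hidden layer are linear combinations of ridge functions $\varrho(a^\top x+b)$, and there is no off-the-shelf result producing multidimensional bumps from such ridge functions at uniformly bounded cost; the known constructive arguments have coefficients that grow with the accuracy. Proving your claim would amount to a new quantitative, constructive UAT, which is harder than the proposition itself. A second, related gap: the uniform approximation on $B_R$ that you invoke requires at least (piecewise) continuity of $\varrho$ \cite{Leshno1993,Pinkus1999}; for merely bounded, non-constant, possibly discontinuous $\varrho$ only $L^p(\mu)$-type density \cite{Hornik1991} is available, so even the bulk estimate is unjustified in the stated generality. (Your depth-$L$ reduction via approximate identities inherits both problems.)

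The paper's proof bypasses both issues entirely by a duality argument rather than a constructive one. If $\cN^\varrho_{N_0,N_L,L,\infty}$ were not dense in $M^\phi(\mu)$, Hahn--Banach would yield a nonzero bounded linear functional vanishing on it; by the representation theorem for duals of Orlicz hearts (Theorem~\ref{thm: Riesz for Orlicz}) this functional has the form $f\mapsto\int_{\R^{N_0}}\langle f,g\rangle_{\R^{N_L}}\dd\mu$ for some $g\in L^\psi(\mu)$, and $\sigma(B):=\int_B g\dd\mu$ defines a finite vector measure by the generalized H\"older inequality. Evaluating the functional on networks implementing $x\mapsto\varrho^{(L-1)}\bigl((A_1x+b_1)_{j_1}\bigr)$ and using that a bounded non-constant function is discriminatory \cite{Cybenko1989,Hornik1991} forces $\sigma=0$, a contradiction. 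No tail estimate and no control of approximant sup-norms ever enters. To salvage your route you would have to either carry out the bounded-bump construction rigorously (plausible for sigmoidal activations with limits at $\pm\infty$, unclear in the stated generality) or find another mechanism to break the circularity you identified; as written, the decisive step is missing.
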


\begin{proof}
  First, note that since $\varrho$ is bounded and $\mu$ is a finite measure on $\R^{N_0}$, by Remark \ref{rem: M phi}~(iii) $\cN^\varrho_{N_0,N_L,L,\infty}$ is a linear subspace of $M^\phi(\mu)$.

  Now, if for some $\mu$,\, $\cN^\varrho_{N_0,N_L,L,\infty}$ is not dense in $M^\phi(\mu)$, then the Hahn--Banach Theorem (see e.g.~\cite[Corollary~4.8.7]{Friedman1982}) yields that there is a nonzero continuous linear functional~$F$ on $M^\phi(\mu)$ that vanishes on $\cN^\varrho_{N_0,N_L,L,\infty}$. According to Theorem~\ref{thm: Riesz for Orlicz}, there exists a unique function $g\in L^\psi(\mu)$, where $\psi$ is the complementary function of the $N$-function $\phi$, such that $F\colon M^\phi(\mu) \to \R$ is of the form
  \begin{equation*}
    F(f)=\int_{\R^{N_0}}\langle f(x), g(x)\rangle_{\R^{N_L}}\dd\mu(x),
  \end{equation*}
  for all $f\in M^\phi(\mu).$

  Define a vector measure $\sigma\colon\mathcal{B}(\R^{N_0})\to\R^{N_L}$ by
  \begin{equation*}
    \sigma(B):=\int_Bg\dd\mu, \quad B\in \mathcal{B}(\R^{N_0}).
  \end{equation*}
  Then, we have by the generalized H{\"o}lder's inequality (Proposition~\ref{prop: genHölder}), that for all $B\in \mathcal{B}(\R^{N_0})$, (where without loss of generality $\mu(B)\neq 0$ since, for $\mu(B)=0$, we have $N_{\phi,\mu}(\mathbf 1_B)=0$),
  \begin{equation*}
    \|\sigma(B)\|=\bigg\|\int_{\R^{N_0}}\mathbf 1_Bg\dd\mu\bigg\|\le 2 N_{\phi,\mu}(\mathbf 1_B)N_{\psi,\mu}(g)< \infty,
  \end{equation*}
  where we used that bounded functions are in $M^\phi(\mu)$, see Remark~\ref{rem: M phi}~(iii).

  Hence, $\sigma=(\sigma_1,\ldots,\sigma_{N_L})$ is a nonzero finite signed vector measure on $\R^{N_0}$ such that
  \begin{equation*}
    F(f)=\int_{\R^{N_0}}\langle f(x),g(x)\rangle_{\R^{N_L}}\dd\mu(x)=\int_{\R^{N_0}}f(x)\dd\sigma(x), \quad f \in  M^\phi(\mu).
  \end{equation*}
  As $F$ vanishes on $\cN^\varrho_{N_0,N_L,L,\infty}$, it in particular vanishes on the subclass of networks obtained by setting $A_l=e_{j_l}e_{j_{l-1}}^\top\in\R^{N_l\times N_{l-1}},\, b_l=0$, for $l=2,\ldots,L-1$, where $e_{j_l}$ is the standard basis vector in $\R^{N_l}$ with a $1$ in the $j_l$-th component and $A_L$ is a matrix where the $j_{L-1}$-th column has entries $1$ and $b_L=0$. With that construction we carry over the $j_1$-th component of $A_1x+b_1$ to each component of the output layer. Hence,
  \begin{equation*}
    \int_{\R^{N_0}}\eta(x)\dd\sigma(x)=\int_{\R^{N_0}} 
  \underbrace{\varrho\bigl(\varrho\bigl(\cdots \varrho}_{L-1\ \text{times}}
  \bigl((A_1x+b_1)_{j_1}\bigr)\bigr)\bigr)\dd\sigma(x)
    = \int_{\R^{N_0}}\varrho^{(L-1)}((A_1x+b_1)_{j_1})\dd\sigma(x)=0,
  \end{equation*}
  for all $A_1\in\R^{N_1\times N_0}$ and $b_1\in\R^{N_1}$. However, since $\varrho^{(L-1)}$ is bounded and non-constant, it is discriminatory, see e.g.~\cite{Cybenko1989,Hornik1991}, meaning there exists no nonzero finite signed measure~$\sigma$ on $\R^{N_0}$ such that
  \begin{equation}\label{eq: discriminatory}
    \int_{\R^{N_0}}\varrho^{(L-1)}((A_1x+b_1)_{j_1})\dd\sigma(x)=0,
  \end{equation}
  for all $A_{1}\in\R^{N_{1}\times N_{0}}$ and $b_{1}\in\R^{N_{1}}$. Hence, any finite signed measure $\sigma$ satisfying \eqref{eq: discriminatory} must be zero, i.e. $\sigma=0$. However, this contradicts the assumption that $\sigma$ is nonzero, which completes the proof.
\end{proof}

\begin{remark}
  Proposition~\ref{prop: UAT bounded} generalizes the classical universal approximation theorems by Cybenko \cite{Cybenko1989} and Hornik \cite{Hornik1991}, see \cite[Theorem~1]{Hornik1991}. Indeed, choosing $\phi(x):=|x|^p$, $x\in \R$, for $p\in [1,\infty)$, we immediately obtain, by Remark~\ref{rem: Lebesgue spaces}, the following $L^p$-universal approximation theorem:

  Let $1\le p<\infty$ and $\varrho\colon \R\to \R$ be a bounded and non-constant function. Then, for every finite Borel measure $\mu$ on $(\R^{N_0},\mathcal{B}(\R^{N_0}))$, the set $\cN^\varrho_{N_0,N_L,L,\infty}$ is dense in $L^p(\mu)$.
\end{remark}

\subsection{Neural networks with ReLU activation function}

In this subsection we derive the universal approximation property of feedforward neural networks, generated by the widely used rectified linear unit (ReLU) activation function $\varrho(x):=\max\{x,0\}$, with bounded width and arbitrary depth, allowing for general locally finite Borel measures on the Euclidean space~$\R^{N_0}$.

\begin{proposition}\label{prop: Orlicz ReLU}
  Let $\varrho$ be the ReLU activation function and let $\phi$ be an $N$-function. Then, $\cN_{N_0,N_L,\infty,N_0+N_{L}+1}^\varrho$ is dense in $M^\phi(\mu)$ with respect to the gauge norm $N_{\phi,\mu}$ for every locally finite Borel measure $\mu$ on $(\R^{N_0},\mathcal{B}(\R^{N_0}))$.
\end{proposition}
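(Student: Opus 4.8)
The plan is to argue constructively, reducing the claim to the known universal approximation theorem for deep narrow ReLU networks on compact sets (e.g.\ \cite{Kidger2020}). Unlike the bounded-activation case, a Hahn--Banach/duality argument is inconvenient here, since the ReLU activation is unbounded and $\mu$ is merely locally finite; so I would proceed by a three-step reduction. The feature that dictates the whole strategy is that $\mu$ may have infinite total mass: a ReLU network of fixed width and growing depth is piecewise affine and therefore generically grows linearly at infinity, so $\int_{\R^{N_0}}\phi(\alpha\|\eta\|)\dd\mu$ can easily diverge. Hence the approximating networks must be forced to have \emph{genuinely} compact support, so that their tails do not contribute to the gauge norm.

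First I would reduce the target class to $C_c(\R^{N_0};\R^{N_L})$. Every $g\in C_c(\R^{N_0};\R^{N_L})$ lies in $M^\phi(\mu)$, because $\int_{\R^{N_0}}\phi(\alpha\|g\|)\dd\mu\le\phi(\alpha\|g\|_\infty)\,\mu(\operatorname{supp}g)<\infty$ for all $\alpha>0$ by local finiteness, and $C_c(\R^{N_0};\R^{N_L})$ is dense in $M^\phi(\mu)$ with respect to $N_{\phi,\mu}$ by the standard density of continuous compactly supported functions in the Orlicz heart of a locally finite Borel measure; cf.\ Appendix~\ref{sec:appendix}. Next I would record the elementary but crucial estimate that uniform approximation on a fixed compact support controls the gauge norm: if $h$ is supported in a compact set $K$ with $\mu(K)>0$, then the choice $k=\|h\|_\infty/\phi^{-1}(1/\mu(K))$ yields $\int_K\phi(\|h\|/k)\dd\mu\le1$, hence $N_{\phi,\mu}(h)\le\|h\|_\infty/\phi^{-1}(1/\mu(K))$. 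Thus it suffices to produce, for $f\in C_c(\R^{N_0};\R^{N_L})$ with $\operatorname{supp}f\subseteq B_R$ and any $\delta>0$, a network $\eta\in\cN_{N_0,N_L,\infty,N_0+N_L+1}^\varrho$ that is supported in a fixed compact set and satisfies $\|f-\eta\|_\infty<\delta$.

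To construct such an $\eta$, fix the ball $B_R$ of radius $R$ containing $\operatorname{supp}f$ together with the larger ball $B_{R+1}$. The deep narrow ReLU theory (\cite{Kidger2020}) provides a width-$(N_0+N_L+1)$ ReLU network $\eta_0$ with $\|f-\eta_0\|_{\infty,B_{R+1}}<\delta$; in particular $\|\eta_0\|<\delta$ on the annulus $B_{R+1}\setminus B_R$, where $f$ vanishes. The network $\eta_0$ is, however, typically not compactly supported, since the register construction underlying the width bound relies on an affine shift valid only on the bounded region $B_{R+1}$. To force compact support I would clamp $\eta_0$ against a continuous piecewise affine envelope $E\colon\R^{N_0}\to[0,\infty)$ with $E\equiv\|f\|_\infty+\delta$ on $B_R$, decreasing to $0$ across $B_{R+1}\setminus B_R$, and $E\equiv0$ outside $B_{R+1}$, setting $\eta:=\max(-E,\min(\eta_0,E))$ componentwise. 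Because the envelope already reaches zero inside the region $B_{R+1}$ where $\eta_0$ is controlled, the clamping is inactive on $B_R$ (so $\eta=\eta_0\approx f$ there), keeps every component of $\eta$ within $\delta$ of that of $f$ on the annulus (where $f=0$ and $\|\eta_0\|<\delta$), and yields $\eta\equiv0$ outside $B_{R+1}$. Since $\min$, $\max$ and the piecewise affine $E$ are exactly computable by ReLU networks, $\eta$ remains in the narrow ReLU class, is compactly supported with $\|f-\eta\|_\infty$ of order $\delta$, and the estimate of the previous paragraph then gives $N_{\phi,\mu}(f-\eta)\le C_{B_{R+1}}\,\delta$.

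The hard part will be the third step: arranging the clamping so that the resulting compactly supported network still fits within the prescribed width $N_0+N_L+1$, rather than some larger width. This requires interleaving the envelope computation with the register/accumulation structure of the deep narrow construction and auditing the neuron budget carefully, which is the genuinely technical core of the argument. By contrast, the two reduction steps are routine consequences of Orlicz-space theory and the local finiteness of $\mu$. I emphasize that it is precisely this local finiteness---allowing infinite total mass---that makes compact support of $\eta$ indispensable and rules out the simpler route of merely approximating $f$ on a large compact set while ignoring the complement.
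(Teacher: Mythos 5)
Your proposal follows essentially the same route as the paper's proof: reduce to $C_c(\R^{N_0};\R^{N_L})$ via the density result (Proposition~\ref{prop: compact dense in M phi}), approximate uniformly on an enlarged compact set using the deep narrow ReLU theorem of \cite{Kidger2020}, force compact support by clamping the network against a piecewise affine envelope computed inside the network, and finally convert uniform closeness plus compact support into a gauge-norm estimate. The paper's envelope is the cutoff $V$ (equal to $1$ on a rectangle $J\supseteq\operatorname{supp}h$, equal to $0$ outside a slightly larger rectangle $K$), and its clamp is $G_j=\min\{\max\{g_j,cV\},CV\}$, which is your $\max(-E,\min(\eta_0,E))$ in barely different clothing. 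The one point where you stop short --- arranging the clamp so that the network still has width $N_0+N_L+1$ --- is exactly the part the paper settles, and it does so not from scratch but by adapting the proof of \cite[Theorem~4.16]{Kidger2020}: in-register neurons copy the inputs forward (using that $x\mapsto\varrho(x+N)-N$ is the identity for $N$ large), \cite[Lemma~B.1]{Kidger2020} realizes each one-dimensional bump $V_i$ with two layers of two enhanced neurons, \cite[Lemma~B.2]{Kidger2020} gives the pairwise minimum needed for $V=\min_i V_i(x_i)$, and the identities $\max\{x,y\}=\varrho(x-y)+y$ and $\min\{x,y\}=x-\varrho(x-y)$ implement the final clamp, all at the stated width with $3N_0+1$ additional hidden layers. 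So the "hard part" you defer is fillable by citation and interleaving, precisely as you anticipate; it is not an obstruction. One minor difference in the error analysis: you insist on a uniform $O(\delta)$ bound on the transition annulus (which your envelope indeed delivers, since it vanishes strictly inside the region where $\eta_0$ is controlled), whereas the paper only bounds $G$ crudely by $\max\{|c|,|C|\}$ there and instead chooses the annulus $K\setminus J$ so thin that $\mu(K\setminus J)\,\phi\bigl(2\max\{|c|,|C|\}/\tilde\lambda\bigr)<\tfrac 1 2$; both exploit local finiteness of $\mu$ in the same way, but the paper's version puts less pressure on the shape of the envelope.
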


\begin{proof}
  For notational simplicity, we assume that $\R^{N_L}$ is endowed with the maximum norm $\|\,\cdot\,\|_\infty$ and, with some abuse of notation, we also denote the maximum norm by $\|\,\cdot\,\|$.
  
  Let $f\in M^\phi(\mu)$. Then, by Proposition~\ref{prop: compact dense in M phi} there exists a continuous function with compact support $h=(h_1,\ldots,h_{N_L})\in C_c(\R^{N_0};\R^{N_L})$ such that
  \begin{equation}\label{eq: 1}
    N_{\phi,\mu}(f-h)\le \frac\epsilon 2.
  \end{equation}
  We set
  \begin{equation*}
    C:=\sup_{x\in\R^{N_0}}\max_{1\le i \le {N_L}}h_i(x)+1
    \quad \text{and}\quad
    c:=\inf_{x\in\R^{N_0}}\min_{1\le i \le {N_L}}h_i(x)-1.
  \end{equation*}
  Let $a_1,b_1,\ldots,a_{N_0},b_{N_0}\in \R$ be such that $J$ defined by
  \begin{equation*}
    J:=[a_1,b_1]\times\cdots\times [a_{N_0},b_{N_0}].
  \end{equation*}
is such that $\operatorname{supp} h \subseteq J$. Moreover, for $\delta>0$, let
  \begin{align*}
    A_i:=a_i-\delta,
    \quad \text{and}\quad
    B_i:=b_i+\delta,
  \end{align*}
  and define
  \begin{equation*}
    K:=[A_1,B_1]\times\cdots\times[A_{N_0},B_{N_0}].
  \end{equation*}
  Now, fix $\tilde\lambda=\frac \epsilon 2$ and choose $\delta$ small enough that
  \begin{equation}
    \mu(K\setminus J)\phi\Bigl(\frac{2\max\{|C|,|c|\}}{\tilde\lambda}\Bigr)<\frac 1 2.\label{eq: 3}
  \end{equation}

  Then, by \cite[Proposition~4.9]{Kidger2020} there is a $g=(g_1,\ldots,g_{N_L})\in \cN_{N_0,N_L,\infty,N_0+N_L+1}^\varrho$ such that
  \begin{equation}
    \sup_{x\in K}\|h(x)-g(x)\|<\delta^{\prime},\label{eq: 4}
  \end{equation}
  where $\delta^{\prime}$ will be chosen in a moment.

  In the following, we proceed in the spirit of the proof of \cite[Theorem~4.16]{Kidger2020} to modify $g$ and construct $G\in\cN_{N_0,N_L,\infty,N_0+N_L+1}^\varrho$, by removing the output layer and adding some more hidden layers (in total $3N_0+1$), such that it takes value $g$ on $J$ and zero on $\R^{N_0}\setminus K$.  In particular, we construct a neural representation of
  \begin{equation*}
    G_j=\min\{\max\{g_j,cV\},CV\},\quad j=1,\ldots,N_L,
  \end{equation*}
  where $V\colon \R^{N_0}\to[0,1]$ is constructed such that it satisfies
  \begin{equation*}
    V(x)=1~\text{for}\, x\in J,\quad V(x)=0~\text{for}\,x\notin K,\quad 0\le V(x)\le 1~\text{otherwise}.
  \end{equation*}

  For notational convenience, we introduce the notion of an enhanced neuron. By this we mean any map of the form $\ell_1\circ\varrho\circ \ell_2$, where $\ell_1,\ell_2$ are affine maps. Furthermore, it is allowed to take affine combinations of enhanced neurons and one may use that for $N>0$ sufficiently large, $x\mapsto\varrho(x+N)-N$ coincides with the identity function. Hence, one enhanced neuron may represent the identity function. This approach allows us to capture the inputs at every hidden layer (called in-register neurons) and preserve the values of those designated neurons through the layers.

  We now modify $g$ by first removing the output layer. Secondly, we want to obtain a mapping $V\colon\R^{N_0}\to[0,1]$. To that end, we use that two layers of two enhanced neurons each may represent the continuous piecewise affine function $V_i\colon \R\to\R$, where
  \begin{equation*}
    V_i(x)=1,\,x\in[a_i,b_i],~\text{and}~ V_i(x)=0,\,x\in(-\infty,a_i-\delta]\cup[b_i+\delta,\infty),\,i=1,\ldots,N_0,
  \end{equation*}
  see \cite[Lemma~B.1]{Kidger2020}. Hence, we are able to replace the value of the $x_i$-in-register neurons with $V_i(x_i)$, $i=1,\ldots,N_0$, through adding $2N_0$ hidden layers.

Analogously, it follows from \cite[Lemma~B.2]{Kidger2020} that a single layer containing two enhanced neurons can implement the function $[0,\infty)^2\ni(x,y)\mapsto\min\{x,y\}$. By successively applying the above construction and adding $N_0-1$ hidden layers, we are able to compute
  \begin{equation*}
    V(x):=\min_{i=1,\ldots,N_0}V_i(x_i),
  \end{equation*}
  and to store its value in one of the in-register neurons. Consequently, the map $V\colon\R^{N_0}\to [0,1]$ is realized by adding $2N_0+(N_0-1)$ hidden layers in total. With that construction $V$ now approximates the indicator function $\mathbf{1}_J$, with support in $K$, taking value $0$ on $\R^{N_0}\setminus K$.

  Now to obtain $G\in\cN^\varrho_{N_0,N_L,\infty,N_0+N_L+1}$, we add two more hidden layers and the output layer, with the value of its neurons denoted by $G_1,\ldots,G_{N_L}$, which yields
  \begin{equation*}
    G_j:=-\varrho(-\varrho(g_j-cV)+(C-c)V)+CV=\min\{\max\{g_j,cV\},CV\},\quad j=1,\ldots,N_L,
  \end{equation*}
  where we used that
  \begin{equation*}
    \R^2\ni(x,y)\mapsto\max\{x,y\}=\varrho(x-y)+y\quad\text{and}\quad \R^2\ni(x,y)\mapsto\min\{x,y\}=x-\varrho(x-y).
  \end{equation*}
  Thus, as intended, this construction produces
  \begin{equation*}
    G_j=\min\{\max\{g_j,cV\},CV\},
  \end{equation*}
  and we deduce that
  \begin{equation*}
    G(x)=g(x)\quad\text{for}\quad x\in J,\quad\text{and}\quad G(x)=0\quad\text{for}\quad x\in\R^{N_0}\setminus K.
  \end{equation*}
  Therefore, since $G=(G_1,\ldots,G_{N_L})$ and $g=(g_1,\ldots,g_{N_L})$ coincide on $J$, and by \eqref{eq: 4} and the monotonicity of $\phi$, we have
  \begin{align*}
    \int_J\phi\Bigl(\frac{\|h(x)-G(x)\|}{\tilde\lambda}\Bigr)\dd\mu(x)&\le \mu(J)\sup_{x\in J}\phi\Bigl(\frac{\|h(x)-g(x)\|}{\tilde\lambda}\Bigr)\\
    &\le \mu(J)\phi\Bigl(\frac{\delta^{\prime}}{\tilde\lambda}\Bigr).
  \end{align*}
  Since $\mu$ is locally finite and $\phi$ is continuous with $\phi(0)=0$, we choose
  \begin{equation*}
    0<\delta^{\prime}< \min\Bigl\{1,\;\tilde\lambda\,\phi^{-1}\Bigl(\tfrac{1}{2\,\mu(J)}\Bigr)\Bigr\},
  \end{equation*}
  such that
  \begin{equation*}
    \mu(J)\phi\Bigl(\frac{\delta^{\prime}}{\tilde\lambda}\Bigr)<\frac 1 2.
  \end{equation*}
  Thus, on $J$ we have
  \begin{equation*}
    \int_J\phi\Bigl(\frac{\|h(x)-G(x)\|}{\tilde\lambda}\Bigr)\dd\mu(x)<\frac 1 2.
  \end{equation*}

  On $K\setminus J$, $G$ is bounded by $\max\{|c|,|C|\}$ by construction. Therefore, we obtain by \eqref{eq: 3} and the monotonicity of $\phi$
  \begin{align*}
    \int_{K\setminus J}\phi\Bigl(\frac{\|h(x)-G(x)\|}{\tilde\lambda}\Bigr)\dd\mu(x)&\le \mu(K\setminus J)\sup_{x\in K\setminus J}\phi\Bigl(\frac{\|h(x)-G(x)\|}{\tilde\lambda}\Bigr)\\
    &\le \mu(K\setminus J)\phi\Bigl(\frac{2\max\{|c|,|C|\}}{\tilde\lambda}\Bigr)\\
    &<\frac 1 2.
  \end{align*}
  Lastly, since $G$ and $h$ vanish outside of $K$, we have
  \begin{equation*}
    \int_{\R^{N_0}\setminus K}\phi\Bigl(\frac{\|h(x)-G(x)\|}{\tilde\lambda}\Bigr)\dd\mu(x)=0.
  \end{equation*}
  Putting everything together, yields
  \begin{align*}
    &\int_{\R^{N_0}}\phi\Bigl(\frac{\|h(x)-G(x)\|}{\tilde\lambda}\Bigr)\dd\mu(x)\\
    &=\int_{\R^{N_0}\setminus K}\phi\Bigl(\frac{\|h(x)-G(x)\|}{\tilde\lambda}\Bigr)\dd\mu(x)+\int_{J}\phi\Bigl(\frac{\|h(x)-G(x)\|}{\tilde\lambda}\Bigr)\dd\mu(x)\\
    &\qquad+\int_{K\setminus J}\phi\Bigl(\frac{\|h(x)-G(x)\|}{\tilde\lambda}\Bigr)\dd\mu(x)\\
    &<0+\frac 1 2+\frac 1 2\\
    &=1.
  \end{align*}
  Thus, we have by definition of $N_{\phi,\mu}$ and since $\tilde\lambda=\frac \epsilon 2$,
  \begin{equation}
    N_{\phi,\mu}(h-G)< \frac \epsilon 2.\label{eq: 5}
  \end{equation}
  Hence, by equation \eqref{eq: 1} and \eqref{eq: 5},
  \begin{equation*}
    N_{\phi,\mu}(f-G)\le N_{\phi,\mu}(f-h)+N_{\phi,\mu}(h-G)<\epsilon,
  \end{equation*}
  which yields the assertion.
\end{proof}

\begin{remark}
  In particular, Proposition~\ref{prop: Orlicz ReLU} extends the universal approximation theorem in $L^p$, presented in \cite[Theorem~4.16]{Kidger2020}, which treats only the Lebesgue measure on $\R^{N_0}$, to the setting of Orlicz spaces, allowing even for arbitrary locally finite Borel measures.
Indeed, choosing $\phi(x):=|x|^p$, $x\in \R$ for $p\in [1,\infty)$, we immediately obtain, by Remark~\ref{rem: Lebesgue spaces}, the following $L^p$-universal approximation theorem:

  Let $\mu$ be a locally finite Borel measure on $\R^{N_0}$, let $\varrho\colon\R\to\R$ be the ReLU activation function, and $p\in[1,\infty)$. Then, $\cN_{N_0,N_L,\infty,N_0+N_{L}+1}^\varrho$ is dense in $L^p(\mu)$.
\end{remark}

\subsection{Neural networks with non-polynomial activation function}

In this subsection we derive a universal approximation theorem in Orlicz spaces, allowing for feedforward neural networks with non-polynomial activation functions and allowing for compactly supported, finite Borel measures on the Euclidean space $\R^{N_0}$.

\begin{proposition}\label{prop: UAT Orlicz nonpolynomial}
  Let $\phi$ be an $N$-function and let $\varrho\in C(\R;\R)$ be a non-polynomial activation function. Then, $\cN^\varrho_{N_0,N_L,L,\infty}$ is dense in $M^\phi(\mu)$ for every finite Borel measure on $(\R^{N_0},\mathcal B(\R^{N_0}))$ with compact support.
\end{proposition}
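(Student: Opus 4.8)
The plan is to mirror the structure of the proof of Proposition~\ref{prop: Orlicz ReLU}, but to exploit the compactness of $\operatorname{supp}\mu$ so as to avoid the cut-off construction entirely. I would fix $f\in M^\phi(\mu)$ and $\epsilon>0$, and set $K:=\operatorname{supp}\mu$, which is compact by assumption. First I would invoke Proposition~\ref{prop: compact dense in M phi} to find $h=(h_1,\dots,h_{N_L})\in C_c(\R^{N_0};\R^{N_L})$ with $N_{\phi,\mu}(f-h)\le\epsilon/2$, thereby reducing the problem to approximating a continuous, compactly supported function in the gauge norm. The second ingredient is the classical density theorem for continuous non-polynomial activation functions (Leshno et al.~\cite{Leshno1993}, Pinkus~\cite{Pinkus1999}): since $\varrho$ is continuous and not a polynomial, the networks in $\cN^\varrho_{N_0,N_L,L,\infty}$ are dense in $C(K;\R^{N_L})$ with respect to the uniform norm on the compact set $K$ (applying the scalar statement componentwise). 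Hence, for any prescribed $\delta>0$ I can choose $\eta\in\cN^\varrho_{N_0,N_L,L,\infty}$ with $\sup_{x\in K}\|h(x)-\eta(x)\|<\delta$.

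It then remains to convert this uniform estimate on $K$ into a gauge-norm estimate. Since $\mu$ is concentrated on $K$ and $\eta$ is continuous, hence bounded on the compact set $K$, I would first observe that $\eta\in M^\phi(\mu)$. Setting $\lambda:=\epsilon/2$ and using that $\phi$ is continuous, nondecreasing and satisfies $\phi(0)=0$, I would estimate
\begin{equation*}
  \int_{\R^{N_0}}\phi\Bigl(\frac{\|h(x)-\eta(x)\|}{\lambda}\Bigr)\dd\mu(x)=\int_{K}\phi\Bigl(\frac{\|h(x)-\eta(x)\|}{\lambda}\Bigr)\dd\mu(x)\le \mu(K)\,\phi\Bigl(\frac{\delta}{\lambda}\Bigr).
\end{equation*}
Because $\mu(K)<\infty$ and $\phi(\delta/\lambda)\to 0$ as $\delta\to 0$, I can fix $\delta$ small enough that the right-hand side is $\le 1$, which by the definition of the gauge norm yields $N_{\phi,\mu}(h-\eta)\le\lambda=\epsilon/2$. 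The triangle inequality then gives $N_{\phi,\mu}(f-\eta)\le N_{\phi,\mu}(f-h)+N_{\phi,\mu}(h-\eta)<\epsilon$, completing the argument.

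I expect the only genuinely delicate point to be the transition from the uniform-norm density furnished by the classical theorems to the Orlicz gauge norm; this is precisely where the two standing hypotheses---that $\mu$ is \emph{finite} and \emph{compactly supported}---do the work, since they confine the integral to the compact set $K$ on which uniform approximation is available while keeping the total mass $\mu(K)$ finite, so that the cut-off machinery needed for merely locally finite measures in Proposition~\ref{prop: Orlicz ReLU} becomes superfluous. A secondary point to address is the dependence on the depth $L$: the cited density results are naturally stated for a single hidden layer, so I would invoke them in that form and note that the resulting uniform density on $K$ already supplies the required class of networks for the transfer to the gauge norm.
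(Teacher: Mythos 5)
Your proposal is correct and follows essentially the same route as the paper: reduce to a continuous function supported on $K=\operatorname{supp}\mu$ via Proposition~\ref{prop: compact dense in M phi}, invoke the classical uniform universal approximation theorem for continuous non-polynomial activations on the compact set $K$, and convert the sup-norm bound into a gauge-norm bound by choosing $\delta$ so that $\mu(K)\,\phi(\delta/\lambda)\le 1$, finishing with the triangle inequality. The one point where the paper is more careful is the depth: for $L>2$ a single-hidden-layer network is not literally an element of $\cN^\varrho_{N_0,N_L,L,\infty}$, and the paper resolves this by noting that each additional hidden layer can be chosen to approximate the identity on compact sets, whereas your closing remark glosses over this conversion step.
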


\begin{proof}
  Let $K$ denote the support of $\mu$. Since, by Proposition~\ref{prop: compact dense in M phi} $C_c(\R^{N_0};\R^{N_L})\subset M^\phi(\mu)$ is dense, it follows that $C(K;\R^{N_L})\subset M^\phi(\mu)$ is dense with respect to the gauge norm $N_{\phi,\mu}$, that is, for $f\in M^\phi(\mu)$ there exists a function $g\in C(K;\R^{N_L})$ such that
  \begin{equation*}
    N_{\phi,\mu}(f-g)\le \frac \epsilon 2.
  \end{equation*}
  Then, by classical universal approximation theorems, see e.g.~\cite{Pinkus1999, Cybenko1989}, there exists a neural network $\eta\in\cN^\varrho_{N_0,N_L,L,\infty}$ such that
  \begin{equation}\label{eq: approx on compacta non-polynomial}
    \sup_{x\in K}\|g(x)-\eta(x)\|<\delta,
  \end{equation}
  where $\delta>0$ will be chosen small in a moment. Note that even though the classical UAT is only stated for single-hidden layer networks, one can use that each additional hidden layer can be chosen to approximate the identity on compact sets.

  Setting $\lambda:=\frac \epsilon 2$, by the monotonicity of $\phi$ and \eqref{eq: approx on compacta non-polynomial}, we have
  \begin{align*}
    \int_{\R^{N_0}}\phi\Bigl(\frac{\|g(x)-\eta(x)\|}{\lambda}\Bigr)\dd\mu(x)
    &=\int_K\phi\Bigl(\frac{\|g(x)-\eta(x)\|}{\lambda}\Bigr)\dd\mu(x)\\
    &\le \mu(K)\sup_{x\in K}\phi\Bigl(\frac{\|g(x)-\eta(x)\|}{\lambda}\Bigr)\\
    &\le \mu(K)\phi\Bigl(\frac{\delta}{\lambda}\Bigr).
  \end{align*}
  Since $\mu$ is finite and $\phi$ is continuous with $\phi(0)=0$, we choose
  \begin{equation*}
    0<\delta<\lambda \phi^{-1}\Bigl(\frac{1}{\mu(K)}\Bigr),
  \end{equation*}
  such that $\mu(K)\phi\Bigl(\frac \delta\lambda\Bigr)<1$. Hence, we have by the definition of the gauge norm  $N_{\phi,\mu}$
  \begin{equation*}
    N_{\phi,\mu}(g-\eta)<\frac\epsilon 2.
  \end{equation*}
  Therefore, we obtain
  \begin{equation*}
    N_{\phi,\mu}(f-\eta)\le N_{\phi,\mu}(f-g)+N_{\phi,\mu}(g-\eta)< \epsilon.
  \end{equation*}
\end{proof}

\begin{remark}
  Many classical universal approximation theorems, dealing with the locally uniform approximation of functions, allow naturally to deduce universal approximation theorems in $L^p$ with respect to compactly supported, finite reference measures $\mu$ on the Euclidean space~$\R^{N_0}$. For instance, a classical universal approximation theorem in $L^p$ can be found in \cite[Proposition~2]{Leshno1993}. Proposition~\ref{prop: UAT Orlicz nonpolynomial} generalizes the aforementioned observation to the setting of Orlicz spaces.
\end{remark}

\subsection{Functional input neural networks}

In this subsection we derive a universal approximation theorem in Orlicz spaces for functional input neural networks, allowing for finite Borel measures on the Euclidean space $\R^{N_0}$. More precisely, to accommodate a broader class of activation functions, we consider functional input neural networks based on additive families $\mathcal H\subseteq\mathcal B_\psi(\R^{N_0})$ and assume that the activation function $\varrho$ is $\psi_1$-activating.

\begin{proposition}\label{prop: UAT functional}
  Let $\phi$ be an $N$-function. Let $\mathcal H\subseteq \mathcal B_\psi(\R^{N_0})$ be an additive family such that
  \begin{equation*}
    \sup\limits_{x\in\R^{N_0}}\frac{\psi_1(h(x))}{\psi(x)}<\infty,
    \quad \text{for all } h\in\mathcal H,
  \end{equation*}
  and $\varrho\in C(\R;\R)$ be $\psi_1$-activating. If $N_{\phi,\mu}(\psi)<\infty$, then $\cN_{\R^{N_0},\R^{N_2}}^{\mathcal H,\varrho}$ is dense in $M^\phi(\mu)$ with respect to the gauge norm $N_{\phi,\mu}$ for every finite Borel measure $\mu$ on $\R^{N_0}$.
\end{proposition}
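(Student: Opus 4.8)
The plan is to import the universal approximation theorem in the weighted space $\mathcal B_\psi(\R^{N_0};\R^{N_2})$ from the framework of \cite{Cuchiero2024} and to transport it to the Orlicz heart $M^\phi(\mu)$ through a continuous embedding that is furnished precisely by the hypothesis $N_{\phi,\mu}(\psi)<\infty$. The definitions of \emph{additive family} and of a $\psi_1$\emph{-activating} activation function have been tailored so that, under the stated growth bound $\sup_{x\in\R^{N_0}}\psi_1(h(x))/\psi(x)<\infty$ for all $h\in\mathcal H$, the weighted-space universal approximation theorem of \cite{Cuchiero2024} applies and yields that $\cN_{\R^{N_0},\R^{N_2}}^{\mathcal H,\varrho}$ is dense in $\mathcal B_\psi(\R^{N_0};\R^{N_2})$ with respect to the weighted norm $\|\cdot\|_{\mathcal B_\psi(\R^{N_0};\R^{N_2})}$. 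I would begin by recalling this fact, together with the observation that the same growth bound, combined with the $\psi_1$-activating property placing $\varrho$ in $\mathcal B_{\psi_1}(\R)$, guarantees that every network in $\cN_{\R^{N_0},\R^{N_2}}^{\mathcal H,\varrho}$ indeed belongs to $\mathcal B_\psi(\R^{N_0};\R^{N_2})$.

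The decisive step is to show that $N_{\phi,\mu}(\psi)<\infty$, i.e.\ $\psi\in L^\phi(\mu)$, upgrades the continuous embedding $C_b(\R^{N_0};\R^{N_2})\hookrightarrow\mathcal B_\psi(\R^{N_0};\R^{N_2})$ to a continuous embedding $\mathcal B_\psi(\R^{N_0};\R^{N_2})\hookrightarrow L^\phi(\mu;\R^{N_2})$. For $g\in\mathcal B_\psi(\R^{N_0};\R^{N_2})$ the pointwise bound $\|g(x)\|\le\|g\|_{\mathcal B_\psi(\R^{N_0};\R^{N_2})}\,\psi(x)$ holds, so the monotonicity and positive homogeneity of the gauge norm give
\[
  N_{\phi,\mu}(g)\le\|g\|_{\mathcal B_\psi(\R^{N_0};\R^{N_2})}\,N_{\phi,\mu}(\psi).
\]
In particular convergence in $\|\cdot\|_{\mathcal B_\psi(\R^{N_0};\R^{N_2})}$ implies convergence in the gauge norm $N_{\phi,\mu}$, and every functional input network lies in $L^\phi(\mu)$.

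With these ingredients the conclusion follows by a two-step approximation. Fix $f\in M^\phi(\mu)$ and $\epsilon>0$; we may assume $\mu\neq 0$, so that $N_{\phi,\mu}(\psi)\in(0,\infty)$ because $\psi>0$ everywhere. By Proposition~\ref{prop: compact dense in M phi} there is $g\in C_c(\R^{N_0};\R^{N_2})$ with $N_{\phi,\mu}(f-g)<\epsilon/2$. Since $C_c(\R^{N_0};\R^{N_2})\subseteq C_b(\R^{N_0};\R^{N_2})\subseteq\mathcal B_\psi(\R^{N_0};\R^{N_2})$, the density in the weighted space provides $\eta\in\cN_{\R^{N_0},\R^{N_2}}^{\mathcal H,\varrho}$ with $\|g-\eta\|_{\mathcal B_\psi(\R^{N_0};\R^{N_2})}<\epsilon/(2\,N_{\phi,\mu}(\psi))$. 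The embedding bound then yields $N_{\phi,\mu}(g-\eta)<\epsilon/2$, and the triangle inequality gives $N_{\phi,\mu}(f-\eta)\le N_{\phi,\mu}(f-g)+N_{\phi,\mu}(g-\eta)<\epsilon$, as required.

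I expect the main obstacle to be the embedding step rather than the density statement: one must confirm that the condition $N_{\phi,\mu}(\psi)<\infty$ is exactly what forces the weighted norm to dominate the gauge norm, and one must check that the hypotheses of the weighted-space theorem of \cite{Cuchiero2024} are genuinely met by our notions of additive family and $\psi_1$-activating function, so that the networks both lie in and are dense in $\mathcal B_\psi(\R^{N_0};\R^{N_2})$. Once these points are secured, the finiteness $N_{\phi,\mu}(\psi)<\infty$ serves as the bridge between the weighted and the Orlicz settings, and no further approximation-theoretic input is needed.
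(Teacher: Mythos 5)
Your proof is correct, and its decisive second half is exactly the paper's: both arguments invoke the weighted-space universal approximation theorem \cite[Theorem~4.13]{Cuchiero2024} and transfer it to the Orlicz setting via the pointwise bound $\|g(x)-\eta(x)\|\le\|g-\eta\|_{\mathcal B_\psi(\R^{N_0};\R^{N_2})}\,\psi(x)$, which by monotonicity and homogeneity of the gauge norm gives $N_{\phi,\mu}(g-\eta)\le\|g-\eta\|_{\mathcal B_\psi(\R^{N_0};\R^{N_2})}\,N_{\phi,\mu}(\psi)$ --- this is verbatim the paper's Step~3. Where you genuinely differ is the reduction of a general $f\in M^\phi(\mu)$ to a continuous bounded target: the paper does this by hand in two stages (truncation $f\mapsto f_{K^\epsilon}$ via dominated convergence, then Lusin's theorem plus Tietze extension to produce $f^\epsilon\in C_b(\R^{N_0};[-K^\epsilon,K^\epsilon])$, with an $\epsilon/3$ splitting), whereas you collapse both stages into a single appeal to Proposition~\ref{prop: compact dense in M phi}, the density of $C_c(\R^{N_0};\R^{N_2})$ in $M^\phi(\mu)$, with an $\epsilon/2$ splitting. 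Your route is shorter and makes this proposition structurally parallel to the ReLU and non-polynomial cases (Propositions~\ref{prop: Orlicz ReLU} and~\ref{prop: UAT Orlicz nonpolynomial}), which also open with Proposition~\ref{prop: compact dense in M phi}; it is not more elementary in substance, since the appendix proposition itself rests on regularity of Borel measures and Urysohn's lemma, machinery comparable to the Lusin--Tietze pair the paper uses. Two small merits of your write-up worth keeping: you explicitly dispose of the degenerate case $N_{\phi,\mu}(\psi)=0$ (possible only if $\mu=0$, since $\psi>0$ everywhere), which the paper silently divides by, and you note why the growth condition together with the $\psi_1$-activating hypothesis places the networks themselves in the weighted space, a point the paper leaves implicit.
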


\begin{proof}
  Let $f\in M^\phi( \mu)$. Fix $\epsilon>0$. For simplicity assume $\R^{N_2}$ is endowed with the $\|\,\cdot\,\|_\infty$ norm and for notational convenience we write $\|\,\cdot\,\|$.

  Step~1: For any $K>0$, we can define the function $f_K(x):=\mathbf{1}_{\{|f(x)|\le K\}}(x)f(x)$ for which we have $N_{\phi,  \mu}(f-f_K)\to 0$ as $K\to \infty$ by dominated convergence. Therefore, there is a $K^{\epsilon}>0$ such that $N_{\phi,  \mu}(f-f_{K^{\epsilon}})\le\frac{\epsilon}{3}$.

  Step~2: By Lusin's theorem \cite[Theorem~2.5.17]{Denkowski2003}, there is a closed set $C^\epsilon\subset \R^{N_0}$, such that $f_{K^\epsilon}$ restricted to $C^\epsilon$ is continuous and
  \begin{equation*}
    \mu(\R^{N_0}\setminus C^\epsilon)\le \bigg(\phi^{-1}\Bigl(\frac{2K^\epsilon}{\lambda}\Bigr)\bigg)^{-1}.
  \end{equation*}
  By Tietze's extension theorem \cite[Theorem~3.6.3]{Friedman1982}, there is a continuous extension $f^\epsilon\in C_b(\R^{N_0};[-K^\epsilon,K^\epsilon])$ of $f_{K^\epsilon},$ such that
  \begin{equation*}
    \int_{\R^{N_0}\setminus C^\epsilon}\phi\Bigl(\frac{\|f_{K^\epsilon}-f^\epsilon\|}{\lambda}\Bigr)\dd \mu\le \mu(\R^{N_0}\setminus C^\epsilon)\phi\Bigl(\frac{2K^\epsilon}{\lambda}\Bigr)\le 1.
  \end{equation*}
  Setting $\lambda=\frac \epsilon 3$ yields
  \begin{equation*}
    N_{\phi,  \mu}(f_{K^\epsilon}-f^\epsilon)\le\frac \epsilon 3.
  \end{equation*}

  Step~3: Moreover, since $C_b\subseteq \mathcal B_\psi$, by \cite[Theorem~4.13]{Cuchiero2024} we can approximate $f^\epsilon$ by some $\eta\in\cN_{\R^{N_0},\R^{N_2}}^{\mathcal H,\varrho}$. More precisely, let $M:=N_{\phi,\mu}(\psi)<\infty$, then we have
  \begin{equation*}
    \|{f}^\epsilon-\eta\|_{\mathcal B_\psi(\R^{N_0};\R^{N_2})}=\sup_{x\in \R^{N_0}}\frac{\|{f}^\epsilon(x)-\eta(x)\|}{\psi(x)}
    < \frac{\epsilon}{3M}.
  \end{equation*}
  Hence, we get
  \begin{align*}
    N_{\phi,  \mu}(f^\epsilon-\eta)&=N_{\phi,  \mu}\Bigl(\frac{f^\epsilon-\eta}{\psi}\psi\Bigr)\le N_{\phi,  \mu}(\|f^\epsilon-\eta\|_{\mathcal B_\psi(\R^{N_0};\R^{N_2})}\psi)\\
    &=\|f^\epsilon-\eta\|_{\mathcal B_\psi(\R^{N_0};\R^{N_2})}N_{\phi,  \mu}(\psi)<\frac \epsilon 3.
  \end{align*}

  Combining Step~1 to~3, we obtain
  \begin{equation*}
    N_{\phi, \mu}(f-\eta)\le N_{\phi, \mu}(f-f_{K^\epsilon})+N_{\phi, \mu}(f_{K^\epsilon}-f^\epsilon)+
    N_{\phi, \mu}({f}^\epsilon-\eta)< \epsilon,
  \end{equation*}
  which concludes the proof.
\end{proof}

\begin{remark}
\begin{itemize}
\item Observe that this approximation result captures in particular the feedforward neural networks associated with the additive family $\mathcal H^\ast$ as given in \eqref{eq: additive family}.
\item Furthermore, the result extends naturally to additive families $\mathcal H \subseteq \mathcal B_\psi(X)$, provided that $X$ is a Polish space and the output space $Y$ is a separable Banach space. In this case, we consider the approximation in step 3 with respect to the weighted norm $\|\,\cdot\,\|_{\mathcal B_\psi(X;Y)}$.
\item Choosing the $N$-function $\phi(x)=|x|^p$ for $x\in\R$ recovers the classical $L^p$-space setting, under the assumption that the weight function $\psi$ satisfies $\int_X \psi^p \dd\mu < \infty$. 
\item The universal approximation property for functional input neural networks is of special interest, as it allows for a broader class of activation functions, including sigmoidal or $\psi_1$-discriminatory activations. For a detailed discussion of admissible activation functions, we refer to  \cite[Proposition~4.4 and Remark~4.5]{Cuchiero2024}.
\end{itemize}
\end{remark}

\section{Distributionally robust approximation property}\label{sec: Robust UAT}

In this subsection, we deduce the approximation property of feedforward and functional input neural networks with respect to a weakly compact family of finite Borel measures. To be precise, let $\mathcal{M}$ be a weakly compact set of finite Borel measures on $(\R^{N_0},\mathcal B(\R^{N_0}))$ (in the sense of \cite[Section~4.7~(v)]{Bogachev2007}). Then, all measures $\nu\in\mathcal{M}$ are absolutely continuous with respect to some finite Borel measure $\mu_{\mathcal{M}}$ on $\R^{N_0}$, i.e., $\nu\ll\mu_{\mathcal{M}}$ and $\mu_{\mathcal{M}}(\R^{N_0})<\infty$, see e.g. \cite[Section~4.7~(v)]{Bogachev2007}. Moreover, the set
\begin{equation*}
  \bigg\{\frac{\d\nu}{\d\mu_{\mathcal{M}}}:\, \nu\in\mathcal M\bigg\}
\end{equation*}
is uniformly integrable in $L^1(\mu_{\mathcal{M}})$ by \cite[Theorem~4.7.18]{Bogachev2007}, where $\frac{\d\nu}{\d\mu_{\mathcal{M}}}$ denotes the Radon--Nikodym density of $\nu$ with respect to $\mu_{\mathcal{M}}$. Hence, by the De la Vallée Poussin Theorem (Theorem~\ref{thm: vallee}) there exists an $N$-function~$\psi_{\mathcal{M}}$ such that
\begin{equation*}
  \sup_{\nu\in\mathcal M}N_{\psi_{\mathcal{M}},\mu_{\mathcal{M}}}\Bigl(\frac{\d\nu}{\d\mu_{\mathcal{M}}}\Bigr)<\infty.
\end{equation*}
We denote by $\phi_{\mathcal{M}}$ the complementary function to $\psi_{\mathcal{M}}$, see Definition~\ref{def: Young function}, and in this case we say that $(\phi_{\mathcal{M}},\psi_{\mathcal{M}})$ is the \emph{associated Young pair} to $\mathcal{M}$.

\begin{theorem}[Robust UAT for neural networks]\label{thm: Robust UAT}
  Let $\mathcal M$ be a weakly compact set of finite Borel measures on $\R^{N_0}$ with associated Young pair $(\phi_{\mathcal M},\psi_{\mathcal M})$ and let $f\in M^{\phi_{\mathcal M}}(\mu;\R^{N_L})$. Then, the following statements hold:
  \begin{enumerate}
    \item[(i)] If $\varrho\in C(\R;\R)$ is bounded and non-constant, then for every $\epsilon>0$ there exists $\eta\in\cN_{N_0,N_L,L,\infty}^\varrho$ such that
    \begin{equation*}
      \sup_{\nu\in\mathcal M}\|f-\eta\|_{L^1(\nu;\R^{N_L})}< \epsilon.
    \end{equation*}
    \item[(ii)] If $\varrho$ is the ReLU activation, then for every $\epsilon>0$ there exists $\eta\in\cN_{N_0,N_L,\infty,N_0+N_L+1}^\varrho$ such that
    \begin{equation*}
      \sup_{\nu\in\mathcal M}\|f-\eta\|_{L^1(\nu;\R^{N_L})}< \epsilon.
    \end{equation*}
    \item[(iii)] If, in addition, $\operatorname{supp}\nu\subset K$ for all $\nu\in\mathcal M$ with some compact $K\subset \R^{N_0}$, then for every non-polynomial $\varrho\in C(\R;\R)$ and $\epsilon>0$ there exists $\eta\in\cN^\varrho_{N_0,N_L,L,\infty}$ such that
    \begin{equation*}
      \sup_{\nu\in\mathcal M}\|f-\eta\|_{L^1(\nu;\R^{N_L})}< \epsilon.
    \end{equation*}
    \item[(iv)] If $L=2$, $\mathcal H\subseteq\mathcal B_\psi(\R^{N_0})$ is an additive family and $\varrho\in C(\R;\R)$ is $\psi_1$-activating with $\sup_{x\in\R^{N_0}}\frac{\psi_1(h(x))}{\psi(x)}<\infty$, for all $h\in\mathcal H$ and $N_{\phi,\mu}(\psi)<\infty$, then for every $\epsilon>0$ there exists $\eta\in\cN_{\R^{N_0},\R^{N_2}}^{\mathcal H,\varrho}$ such that
    \begin{equation*}
      \sup_{\nu\in\mathcal M}\|f-\eta\|_{L^1(\nu;\R^{N_2})}< \epsilon.
    \end{equation*}
  \end{enumerate}
\end{theorem}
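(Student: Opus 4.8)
The plan is to reduce each of the four uniform statements to a single approximation in the Orlicz heart $M^{\phi_{\mathcal M}}(\mu_{\mathcal M})$ with respect to the dominating measure $\mu_{\mathcal M}$, and then to transfer this Orlicz estimate into a uniform $L^1(\nu)$-estimate by duality. All of the genuine approximation-theoretic work has already been carried out in the propositions of Section~\ref{sec: UAT in Orlicz spaces}; the robust statement is obtained from them through the generalized H\"older inequality (Proposition~\ref{prop: genHölder}) together with the uniform control of the Radon--Nikodym densities furnished by the associated Young pair.

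First I would record the key estimate. Let $\eta$ be any network lying in $M^{\phi_{\mathcal M}}(\mu_{\mathcal M})$ (each relevant network class is a linear subspace of $M^{\phi_{\mathcal M}}(\mu_{\mathcal M})$, as in the respective propositions), so that $f-\eta\in M^{\phi_{\mathcal M}}(\mu_{\mathcal M})\subseteq L^{\phi_{\mathcal M}}(\mu_{\mathcal M})$. For $\nu\in\mathcal M$ I rewrite the $L^1(\nu)$-error through its density and apply H\"older's inequality to the complementary pair $(\phi_{\mathcal M},\psi_{\mathcal M})$:
\[
\|f-\eta\|_{L^1(\nu)}=\int_{\R^{N_0}}\|f-\eta\|\,\frac{\d\nu}{\d\mu_{\mathcal M}}\dd\mu_{\mathcal M}
\le 2\,N_{\phi_{\mathcal M},\mu_{\mathcal M}}(f-\eta)\,N_{\psi_{\mathcal M},\mu_{\mathcal M}}\Bigl(\frac{\d\nu}{\d\mu_{\mathcal M}}\Bigr).
\]
Setting $C:=\sup_{\nu\in\mathcal M}N_{\psi_{\mathcal M},\mu_{\mathcal M}}(\d\nu/\d\mu_{\mathcal M})<\infty$, which is finite by the definition of the associated Young pair, and taking the supremum over $\nu$ gives
\[
\sup_{\nu\in\mathcal M}\|f-\eta\|_{L^1(\nu)}\le 2C\,N_{\phi_{\mathcal M},\mu_{\mathcal M}}(f-\eta).
\]

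With this estimate in hand, each part follows by making the right-hand side small; I may assume $C>0$, since $C=0$ forces all $\nu$ to vanish and the claim is trivial. As $\mu_{\mathcal M}$ is a finite Borel measure and $f\in M^{\phi_{\mathcal M}}(\mu_{\mathcal M})$, in case~(i) I apply Proposition~\ref{prop: UAT bounded}, in case~(ii) Proposition~\ref{prop: Orlicz ReLU} (a finite measure being locally finite), and in case~(iv) Proposition~\ref{prop: UAT functional}, to obtain $\eta$ in the corresponding network class with $N_{\phi_{\mathcal M},\mu_{\mathcal M}}(f-\eta)<\epsilon/(2C)$; substituting into the last display yields $\sup_{\nu\in\mathcal M}\|f-\eta\|_{L^1(\nu)}<\epsilon$. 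For case~(iii) the only extra ingredient is that Proposition~\ref{prop: UAT Orlicz nonpolynomial} requires a compactly supported reference measure: here I choose $\mu_{\mathcal M}$ to be a countable convex combination of measures from $\mathcal M$ (a legitimate choice of dominating measure), each supported in the compact set $K$ by hypothesis, so that $\operatorname{supp}\mu_{\mathcal M}\subseteq K$ and Proposition~\ref{prop: UAT Orlicz nonpolynomial} applies verbatim.

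The main obstacle is the bridging step, namely arranging that the Orlicz duality is matched precisely to the family $\mathcal M$: the densities $\d\nu/\d\mu_{\mathcal M}$ must lie in $L^{\psi_{\mathcal M}}(\mu_{\mathcal M})$ with uniformly bounded gauge norm, while the approximation error $f-\eta$ must live in the complementary space $M^{\phi_{\mathcal M}}(\mu_{\mathcal M})$. This is exactly the content of the De la Vall\'ee Poussin theorem (Theorem~\ref{thm: vallee}) and the definition of the associated Young pair, and it explains why the hypotheses are phrased through $(\phi_{\mathcal M},\psi_{\mathcal M})$ and $f\in M^{\phi_{\mathcal M}}$. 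Once this pairing is in place, the four cases reduce to direct invocations of the Section~\ref{sec: UAT in Orlicz spaces} propositions, and no further estimates are needed.
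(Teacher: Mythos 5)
Your proof is correct and follows essentially the same route as the paper: approximate $f$ in the Orlicz heart $M^{\phi_{\mathcal M}}(\mu_{\mathcal M})$ via the relevant proposition from Section~\ref{sec: UAT in Orlicz spaces}, then convert this into a uniform $L^1(\nu)$ bound through the generalized H\"older inequality (Proposition~\ref{prop: genHölder}) and the uniform gauge-norm bound on the densities $\frac{\d\nu}{\d\mu_{\mathcal M}}$ coming from the associated Young pair. Your handling of case~(iii) is in fact slightly more explicit than the paper's, which merely asserts that $\mu_{\mathcal M}$ is a finite regular Borel measure with compact support, whereas you justify this by taking $\mu_{\mathcal M}$ to be a countable convex combination of measures from $\mathcal M$, all supported in the compact set $K$.
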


\begin{proof}
  (i) Let $K:=\sup_{\nu\in\mathcal{M}}N_{\psi_{\mathcal{M}},\mu_{\mathcal{M}}}\Bigl(\frac{\dd\nu}{\dd\mu_{\mathcal{M}}}\Bigr)<\infty$. By Proposition~\ref{prop: UAT bounded} we have that for $\epsilon>0$ there exists an $\eta\in\cN^\varrho_{N_0,N_L,L,\infty}$ such that $N_{\phi_{\mathcal{M}},\mu_{\mathcal{M}}}(f-\eta)<\frac{\epsilon}{2K}$. Therefore, using the generalized H{\"o}lder's inequality (Proposition~\ref{prop: genHölder}), we obtain
  \begin{align*}
    \sup_{\nu\in \mathcal M}\|f-\eta\|_{L^1(\nu;\R^{N_L})}&=\sup_{\nu\in \mathcal M}\int_{\R^{N_0}}\|f-\eta\|\dd\nu\\
    &=\sup_{\nu\in \mathcal M}\int_{\R^{N_0}}\|f-\eta\|\frac{\d\nu}{\d\mu_{\mathcal{M}}}\dd\mu_{\mathcal M}\\
    &\le 2 N_{\phi_{\mathcal{M}},\mu_{\mathcal{M}}}(f-\eta)\sup_{\nu\in\mathcal M}N_{\psi_{\mathcal{M}},\mu_{\mathcal{M}}}\Bigl(\frac{\d\nu}{\d\mu_{\mathcal{M}}}\Bigr)\\
    &< \epsilon,
  \end{align*}
  which concludes the proof.

  (ii) and (iv) follow by the same argument as in the proof of (i), relying instead on Proposition~\ref{prop: Orlicz ReLU} and Proposition~\ref{prop: UAT functional}, respectively.

  The proof of (iii) follows also analogously to (i), invoking Proposition~\ref{prop: UAT Orlicz nonpolynomial} and using that $\mu_{\mathcal M}$ is a finite regular Borel measure with compact support.
\end{proof}

We immediately obtain the following corollary.

\begin{corollary}\label{cor:robust UAT}
  Let $\mathcal M$ be a weakly compact set of finite Borel measures on $\R^{N_0}$ with associated Young pair $(\phi_{\mathcal M},\psi_{\mathcal M})$, and let $f\colon \R^{N_0}\to\R^{N_L}$ be bounded and measurable. Then, for every $\epsilon>0$, there exists a neural network $\eta$ such that
  \begin{equation*}
    \sup_{\nu\in\mathcal M} \|f-\eta\|_{L^1(\nu;\R^{N_L})} < \epsilon,
  \end{equation*}
  where the class of neural networks is specified according to the following cases:
  \begin{enumerate}
    \item[(i)] if $\varrho$ is bounded and non-constant, then $\eta\in\cN_{N_0,N_L,L,\infty}^\varrho$;
    \item[(ii)] if $\varrho$ is the ReLU activation, then $\eta\in\cN_{N_0,N_L,\infty,N_0+N_L+1}^\varrho$;
    \item[(iii)] if $\varrho\in C(\R;\R)$ is non-polynomial and $\operatorname{supp}\nu \subset K$ for all $\nu\in\mathcal M$ for a compact $K\subset \R^{N_0}$, then $\eta\in\cN_{N_0,N_L,L,\infty}^\varrho$;
    \item[(iv)] if $L=2$, $\mathcal H\subseteq\mathcal B_\psi(\R^{N_0})$ is an additive family and $\varrho\in C(\R;\R)$ is $\psi_1$-activating with $\sup_{x\in \R^{N_0}} \frac{\psi_1(h(x))}{\psi(x)}<\infty$ for all $h\in\mathcal H$ and $N_{\phi,\mu}(\psi)<\infty$, then $\eta\in\cN_{\R^{N_0},\R^{N_2}}^{\mathcal H,\varrho}$.
  \end{enumerate}
\end{corollary}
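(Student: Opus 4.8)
The plan is to treat this as a direct specialization of Theorem~\ref{thm: Robust UAT}: the sole difference between the hypotheses of the theorem and those of the corollary is that the theorem assumes $f\in M^{\phi_{\mathcal M}}(\mu_{\mathcal M})$, whereas here $f$ is merely bounded and measurable. Thus the only thing I would check is that every bounded measurable function belongs to the Orlicz heart $M^{\phi_{\mathcal M}}(\mu_{\mathcal M})$, after which each case follows by quoting the corresponding part of the theorem.

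First I would recall, from the construction preceding Theorem~\ref{thm: Robust UAT}, that the associated Young pair $(\phi_{\mathcal M},\psi_{\mathcal M})$ is accompanied by a \emph{finite} Borel measure $\mu_{\mathcal M}$ dominating every $\nu\in\mathcal M$, and that $\phi_{\mathcal M}$, being an $N$-function, is finite-valued. Next I would verify the membership directly: for bounded measurable $f$ with $\sup_{x\in\R^{N_0}}\|f(x)\|\le C$ and any $\alpha>0$, monotonicity of $\phi_{\mathcal M}$ yields
\begin{equation*}
  \int_{\R^{N_0}}\phi_{\mathcal M}(\alpha\|f\|)\dd\mu_{\mathcal M}\le \phi_{\mathcal M}(\alpha C)\,\mu_{\mathcal M}(\R^{N_0})<\infty,
\end{equation*}
so that $f\in M^{\phi_{\mathcal M}}(\mu_{\mathcal M})$; this is precisely Remark~\ref{rem: M phi}~(iii).

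Finally I would apply Theorem~\ref{thm: Robust UAT} case by case. In each of (i)--(iv) the remaining structural hypotheses -- the non-constant boundedness of $\varrho$ in (i), the ReLU choice in (ii), non-polynomiality together with the common compact support condition $\operatorname{supp}\nu\subset K$ in (iii), and the additive-family, $\psi_1$-activating, $\sup_{x\in\R^{N_0}}\frac{\psi_1(h(x))}{\psi(x)}<\infty$ and $N_{\phi,\mu}(\psi)<\infty$ conditions in (iv) -- are carried over verbatim, so the matching part of the theorem applies and produces a network $\eta$ in the stated class with $\sup_{\nu\in\mathcal M}\|f-\eta\|_{L^1(\nu)}<\epsilon$. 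I do not expect any genuine obstacle here: the argument is a one-line reduction, and the only point requiring (minimal) care is the Orlicz-heart membership above, which is automatic because $\mu_{\mathcal M}$ is finite.
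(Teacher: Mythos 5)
Your proposal is correct and follows exactly the paper's route: the paper states the corollary as an immediate consequence of Theorem~\ref{thm: Robust UAT}, the only content being that a bounded measurable $f$ lies in $M^{\phi_{\mathcal M}}(\mu_{\mathcal M})$ because $\mu_{\mathcal M}$ is finite (Remark~\ref{rem: M phi}~(iii)), which is precisely the membership check you carry out before quoting the theorem case by case.
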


\begin{remark}
  Corollary~\ref{cor:robust UAT} applies, in particular, to $f\in C_b(\R^{N_0};\R^{N_L})$, and for suitable Young pairs $(\phi_{\mathcal M},\psi_{\mathcal M})$, to $f\in L^p(\mu;\R^{N_L})$ with $1 = \frac{1}{p} + \frac{1}{q}$ for $p,q \in (1,\infty)$, for instance, if $\psi_{\mathcal M}(x) = |x|^q$, $x\in \R$.
\end{remark}

\appendix
\section{Essentials on Orlicz spaces}\label{sec:appendix}

In this appendix, we recall some basic concepts related to Orlicz spaces and their fundamental properties. To that end, we first need to discuss a few observations on certain convex functions that play a key role in the theory of Orlicz spaces. For a more comprehensive introduction to Orlicz spaces we refer, e.g., to \cite{Rao1991}.

\begin{definition}\label{def: Young function}
  A convex function $\phi\colon\R\to\R^+\cup\{\infty\}$ is called a \emph{Young function} if $\phi(0)=0$, $\phi(-x)=\phi(x)$ for $x\in \R$, and $\lim_{x\to\infty}\phi(x)=+\infty$. The convex function $\psi\colon\R\to\R^+\cup\{\infty\}$, defined by
  \begin{equation*}
    \psi(y):=\sup\{x|y|-\phi(x):\,x\ge 0\},\quad y\in\R,
  \end{equation*}
  is called the \emph{complementary function} to $\phi$. In this case $(\phi,\psi)$ is called a \emph{complementary Young pair}.
\end{definition}

From the definition of a Young function~$\phi$, it follows that its complementary function~$\psi$ satisfies $\psi(0)=0$ and $\psi(-y)=\psi(y)$ for $y \in \R$, as well as, that $\psi$ is a convex increasing function such that $\lim_{y\to +\infty}\psi(y)=+\infty$. Moreover, the pair $(\phi,\psi)$ fulfills the so called \emph{Young's inequality}:
\begin{equation*}
  xy\le \phi(x)+\psi(y),\quad x,y \in \R.
\end{equation*}

\begin{definition}\label{def: N-function}
  A continuous Young function $\phi\colon \R\to\R^+$ is called an \emph{$N$-function} if $\phi(x)=0$ if and only if $x=0$, $\lim_{x\to 0}\phi(x)/x=0$,  and $\lim_{x\to \infty}\phi(x)/x=+\infty$. A complementary Young pair $(\phi,\psi)$ is called a \emph{pair of complementary $N$-functions} if $\phi$ is an $N$-function.
\end{definition}

Let $(\Omega,\Sigma,\mu)$ be a finite measure space. We denote by $\tilde{\mathcal L}^\phi(\mu;\R^m)$ the set of all measurable functions $f\colon\Omega\to\R^m$ such that
\begin{equation*}
  \int_\Omega\phi(\|f\|)\dd\mu<\infty.
\end{equation*}
For a Young function~$\phi$, the \emph{Orlicz space} $\mathcal L^\phi(\mu; \R^m)$ is defined as the space of all measurable functions $f\colon \Omega\to\R^m$ such that $\alpha f\in\tilde{\mathcal L}^\phi(\mu;\R^m)$ for some $\alpha>0$, that is
\begin{equation*}
  \mathcal L^\phi(\mu;\R^m):=\bigg\{f\colon\Omega\to\R^m\,:\, \int_\Omega \phi(\alpha \|f\|)\dd\mu<\infty\,\text{for some}\,  \alpha>0\bigg\}.
\end{equation*}
Note that $\mathcal L^\phi(\mu;\R^m)$  is a vector space which we equip with the gauge norm
\begin{equation*}
  N_{\phi,\mu}(f):=\inf\Bigl\{k>0:\,\int_\Omega\phi\Bigl(\frac{\|f\|}{k}\Bigr)\dd\mu\le 1\Bigr\}.
\end{equation*}

For Orlicz spaces the following \emph{generalized H{\"o}lder's inequality} holds, see \cite[Chapter~3.3, Proposition~1 and Remark]{Rao1991}.

\begin{proposition}[Generalized H{\"o}lder's inequality]\label{prop: genHölder}
  Let $(\phi,\psi)$ be a complementary Young pair and $k,m\in\N$. If $f\in\mathcal L^\phi(\mu;\R^m)$ and $g\in\mathcal L^\psi(\mu;\R^k)$, then one has
  \begin{equation*}
    \int_\Omega \|f\|\|g\|\dd\mu\le 2 N_{\phi,\mu}(f)N_{\psi,\mu}(g).
  \end{equation*}
\end{proposition}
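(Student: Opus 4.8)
The plan is to reduce the statement to the pointwise Young's inequality $xy\le\phi(x)+\psi(y)$ for the complementary pair $(\phi,\psi)$, applied to the nonnegative scalar functions $\|f\|$ and $\|g\|$ after normalising by the respective gauge norms. The observation that makes this work is that both $N_{\phi,\mu}(f)$ and $N_{\psi,\mu}(g)$ depend on $f,g$ only through the scalar functions $\|f\|$ and $\|g\|$, so the vector-valued case is no harder than the scalar one.

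First I would dispose of the degenerate cases. If $N_{\phi,\mu}(f)=0$, then $f=0$ $\mu$-a.e., so the left-hand side vanishes and there is nothing to prove; the case $N_{\psi,\mu}(g)=0$ is symmetric. Since $f\in\mathcal L^\phi(\mu;\R^m)$ and $g\in\mathcal L^\psi(\mu;\R^k)$, both gauge norms are finite, so it remains to treat $a:=N_{\phi,\mu}(f)\in(0,\infty)$ and $b:=N_{\psi,\mu}(g)\in(0,\infty)$.

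The crucial preliminary step is to verify the modular inequality at the value of the gauge norm itself, namely
\[
  \int_\Omega \phi\Bigl(\frac{\|f\|}{a}\Bigr)\dd\mu\le 1
  \qquad\text{and}\qquad
  \int_\Omega \psi\Bigl(\frac{\|g\|}{b}\Bigr)\dd\mu\le 1.
\]
I would derive this from the defining infimum of $N_{\phi,\mu}$: choosing a sequence $k_n\downarrow a$ with $\int_\Omega\phi(\|f\|/k_n)\dd\mu\le 1$, the integrands increase pointwise to $\phi(\|f\|/a)$ by monotonicity of $\phi$ on $[0,\infty)$, so the monotone convergence theorem transfers the bound $1$ to the limit. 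The analogous argument applies to $g$ and $\psi$.

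With these two bounds in hand, the conclusion is immediate. Applying Young's inequality pointwise with $x=\|f(\omega)\|/a$ and $y=\|g(\omega)\|/b$ gives
\[
  \frac{\|f\|\,\|g\|}{ab}\le \phi\Bigl(\frac{\|f\|}{a}\Bigr)+\psi\Bigl(\frac{\|g\|}{b}\Bigr),
\]
and integrating over $\Omega$ together with the two modular bounds yields $\frac{1}{ab}\int_\Omega\|f\|\,\|g\|\dd\mu\le 2$, which is precisely the claimed estimate. I expect the only genuinely delicate point to be the justification of the modular inequality at the gauge norm (the limiting argument with the infimum, where one relies on the monotonicity and left-continuity of $\phi$ to identify the monotone limit of the integrands), since the remaining steps are a direct application of Young's inequality; the factor $2$ arises exactly from summing the two unit modular bounds.
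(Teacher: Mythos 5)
Your proof is correct, and it is essentially the argument behind the result as the paper uses it: the paper does not prove this proposition at all, but cites \cite[Chapter~3.3, Proposition~1 and Remark]{Rao1991}, where the proof is exactly this normalisation-plus-Young's-inequality argument, with the factor $2$ coming from the two unit modular bounds. One small remark on the point you yourself flag as delicate: you can avoid any appeal to left-continuity of $\phi$ (which the paper's Definition~\ref{def: Young function} does not guarantee, since a Young function may jump to $+\infty$) by not proving the modular inequality at $a=N_{\phi,\mu}(f)$ at all; instead apply Young's inequality with the admissible levels $k_n\downarrow a$ and $l_n\downarrow b$, integrate to get $\int_\Omega \|f\|\,\|g\|\dd\mu\le 2k_nl_n$, and let $n\to\infty$ in this final inequality.
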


Moreover, we set $\mathcal N_{\phi,\mu}:=\{f\in\mathcal L^\phi(\mu;\R^m):\,N_{\phi,\mu}(f)=0\}$. Then, $f\in\mathcal N_{\phi,\mu}$ if and only if $f=0$ a.e. Hence, the factor space $L^\phi(\mu;\R^m):=\mathcal L^\phi(\mu;\R^m)/\mathcal N_{\phi,\mu}$ with $\bar{N}_{\phi,\mu}([f]):=N_{\phi,\mu}(f)$ is well-defined, with $[f]:=\{g\in\mathcal L^\phi(\mu;\R^m):\, g-f\in\mathcal N_{\phi,\mu}\}$. This implies that $(\mathcal L^\phi(\mu;\R^m),\bar{N}_{\phi,\mu})$ is a Banach space, see \cite[Corollary~12, p.69]{Rao1991}. With a slight abuse of notation, we use the symbol $N_{\phi,\mu}(\cdot)$ for $\bar{N}_{\phi,\mu}(\cdot)$ and note that $L^\phi(\mu;\R^m)$ is the Orlicz space $(\mathcal L^\phi(\mu;\R^m),N_{\phi,\mu}(\cdot))$ in which a.e.~equivalent functions are identified. In particular, $L^\phi(\mu;\R^m)$ is a Banach space. Similarly, we say that $\tilde{L}^\phi(\mu;\R^m)$ is the space $(\tilde{\mathcal L}^\phi(\mu;\R^m),N_{\phi,\mu}(\cdot))$ where a.e.~equal elements are identified.

\medskip

To formulate an analogue of the classical Riesz representation theorem in the setting of Orlicz spaces, we introduce
\begin{equation*}
  M^\phi(\mu;\R^m):=\bigg\{f\in L^\phi(\mu;\R^m)\,:\,kf\in\tilde{L}^\phi(\mu;\R^m)\,\text{for all}\,k>0\bigg\},
\end{equation*}
which is a linear subspace of $L^\phi(\mu;\R^m)$, known as the \emph{Orlicz heart}, see \cite[Definition~2.1.9]{Edgar1992}. Note that, taking $\phi(x):=|x|^p$ for $x\in\R$, we have \begin{equation*}
L^\phi(\mu;\R^m) =  M^\phi(\mu;\R^m) =  L^p(\mu;\R^m).
\end{equation*}

The next theorem characterizes the adjoint (dual) space of the Orlicz heart $M^\phi(\mu;\R^m)$, which we denote by $(M^\phi(\mu;\R^m))^\ast$, see \cite[Chapter~4.1, Theorem~7]{Rao1991}.

\begin{theorem}\label{thm: Riesz for Orlicz}
  Let $(\phi,\psi)$ be a pair of complementary $N$-functions. Then, one has
  \begin{equation*}
    (M^\phi(\mu;\R^m))^\ast=L^\psi(\mu;\R^m)
  \end{equation*}
  and, for each $x^\ast\in (M^\phi(\mu;\R^m))^\ast$, there is a unique $g_{x^\ast}\in L^\psi(\mu;\R^m)$ such that
  \begin{equation*}
    x^\ast(f)=\int_\Omega \langle f,g_{x^\ast}\rangle_{\R^m}\dd\mu,\quad f\in M^\phi(\mu;\R^m).
  \end{equation*}
\end{theorem}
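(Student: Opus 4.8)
The plan is to identify $(M^\phi(\mu;\R^m))^\ast$ with $L^\psi(\mu;\R^m)$ by proving both inclusions, reducing the vector-valued case to the scalar one by treating the coordinate functionals $f\mapsto x^\ast(fe_i)$ separately and assembling $g=(g_1,\ldots,g_m)$. The easy inclusion comes first: for any $g\in L^\psi(\mu;\R^m)$ the generalized H\"older inequality (Proposition~\ref{prop: genHölder}) shows that $f\mapsto\int_\Omega\langle f,g\rangle_{\R^m}\dd\mu$ is a well-defined bounded linear functional on $M^\phi(\mu;\R^m)$ of norm at most $2N_{\psi,\mu}(g)$. This embeds $L^\psi(\mu;\R^m)$ into the dual and fixes the form of the pairing, so it remains to show that every functional arises this way.

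For the converse, fix $x^\ast\in(M^\phi(\mu;\R^m))^\ast$. Since $\mu$ is finite and $\phi$ is an $N$-function, every bounded measurable function---in particular every indicator $\mathbf 1_E$---lies in $M^\phi(\mu)$, so I would define the set functions $\nu_i(E):=x^\ast(\mathbf 1_Ee_i)$ on $\Sigma$. Finite additivity is immediate, and countable additivity follows because $E_N\downarrow\emptyset$ forces $\mu(E_N)\to0$ and hence $N_{\phi,\mu}(\mathbf 1_{E_N})\to0$, so the continuity of $x^\ast$ passes to $\nu_i$; moreover $\mu(E)=0$ gives $\mathbf 1_E=0$ in $M^\phi(\mu)$, whence $\nu_i\ll\mu$. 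The Radon--Nikodym theorem then produces $g_i\in L^1(\mu)$ with $\nu_i(E)=\int_Eg_i\dd\mu$, and with $g=(g_1,\ldots,g_m)$ the representation $x^\ast(f)=\int_\Omega\langle f,g\rangle_{\R^m}\dd\mu$ holds for simple $f$ and then, by density of simple functions in $M^\phi(\mu;\R^m)$ together with continuity of both sides, for all $f\in M^\phi(\mu;\R^m)$.

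The main obstacle is upgrading $g$ from $L^1$ to $L^\psi$, i.e.\ a converse to H\"older's inequality. The idea is to use the dual description of the Orlicz norm: any simple $f$ with $\int_\Omega\phi(\|f\|)\dd\mu\le1$ satisfies $N_{\phi,\mu}(f)\le1$, so $\int_\Omega\langle f,g\rangle_{\R^m}\dd\mu=x^\ast(f)\le\|x^\ast\|$; taking the supremum over all such simple $f$, and justifying via truncation of $g$ and monotone convergence that this supremum indeed computes the Orlicz norm of $g$, bounds that norm by $\|x^\ast\|$ and places $g$ in $L^\psi(\mu;\R^m)$, with $N_{\psi,\mu}(g)$ controlled by $\|x^\ast\|$ up to the usual equivalence between the Orlicz and gauge norms. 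Uniqueness is then routine, since if $g,g'$ both represent $x^\ast$ their difference integrates to zero against every indicator and so $g=g'$ $\mu$-a.e. I expect this converse-H\"older step---and specifically the monotone-convergence argument identifying the supremum with $N_{\psi,\mu}(g)$---to be the technical crux, as it is precisely here that working with the Orlicz \emph{heart} $M^\phi$ rather than the full space $\mathcal L^\phi$ is essential to exclude singular functionals.
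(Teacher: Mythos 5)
The paper offers no proof of this theorem at all: it is imported verbatim from Rao's monograph (\cite[Chapter~4.1, Theorem~7]{Rao1991}), so the only comparison possible is with that classical proof, which is essentially what you have reconstructed. Your outline is correct and follows the standard route: the embedding of $L^\psi(\mu;\R^m)$ into the dual via the generalized H\"older inequality (Proposition~\ref{prop: genHölder}); the coordinate set functions $\nu_i(E)=x^\ast(\mathbf 1_E e_i)$, which are countably additive because $\mu$ finite and $E_N\downarrow\emptyset$ give $\int_\Omega\phi(\mathbf 1_{E_N}/k)\dd\mu=\phi(1/k)\mu(E_N)\to 0$ for every $k>0$, hence $N_{\phi,\mu}(\mathbf 1_{E_N})\to 0$; Radon--Nikodym to obtain $g_i\in L^1(\mu)$; extension from simple functions by density, which is precisely Remark~\ref{rem: M phi}~(ii) --- this is where the $N$-function hypothesis and the restriction to the heart $M^\phi$ rather than $\mathcal L^\phi$ do their work, exactly as you note, since it is the density of simple functions in $M^\phi$ that rules out singular functionals. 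Two remarks on the converse-H\"older step, which you rightly identify as the crux: first, in the vector-valued setting the aligned test functions $f=h\,\operatorname{sgn}(g_i)\,e_i$ with $h\ge 0$ simple are still simple, so the coordinatewise reduction goes through; second, passing from finiteness of $\sup\bigl\{\int_\Omega|f g_i|\dd\mu:\int_\Omega\phi(|f|)\dd\mu\le 1\bigr\}$ to $g_i\in L^\psi(\mu)$ is genuinely more than truncation plus monotone convergence --- one needs the equality case of Young's inequality applied to truncations of $g_i$ (the content of the Orlicz-norm versus gauge-norm equivalence in \cite{Rao1991}), so your sketch asserts the key inequality rather than proving it, though the assertion is true and standard. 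In sum, the paper buys brevity by citing a textbook result; your self-contained argument buys transparency, making visible exactly where finiteness of $\mu$, the $N$-function property, and the choice of the Orlicz heart are used.
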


\begin{remark}\label{rem: M phi}
  \begin{enumerate}
    \item[(i)] If the Young function $\phi\colon\R\to\R^+$ satisfies the $\Delta_2$-condition, i.e.
    \begin{equation*}
      \phi(2x)\le K \phi(x),\quad x\ge x_0\ge 0,
    \end{equation*}
    for some absolute constant $K>0$, then, by \cite[Corollary~4.1.9]{Rao1991}, we have
    \begin{equation*}
      M^\phi(\mu;\R^m)=\tilde{L}^\phi(\mu;\R^m)=L^\phi(\mu;\R^m).
    \end{equation*}
    \item[(ii)] If $\phi$ is an $N$-function, then $M^\phi(\mu;\R^m)$ is the closure of simple step functions in $L^\phi(\mu;\R^m)$ under the gauge norm $N_{\phi,\mu}$, see \cite[p.75]{Rao1991}. Moreover, in that case $(M^\phi(\mu;\R^m),N_{\phi,\mu})$ is a Banach space \cite[p.76]{Rao1991}.
    \item[(iii)] Indeed, every bounded function is in $M^\phi(\mu;\R^m)$ as $\mu$ is a finite measure.
  \end{enumerate}
\end{remark}

Whenever the dimension of the image space is clear from context, we simply write $M^\phi(\mu)$ and $L^\phi(\mu)$. Moreover, we recall that $C_c(\R^n;\R^m)$ denotes the space of all continuous functions $f\colon \R^n\to \R^m$ with compact support.

\begin{proposition}\label{prop: compact dense in M phi}
  Let $\phi$ be an $N$-function. Then, $C_c(\R^n;\R^m)\subset M^\phi(\mu)$ is dense for every locally finite Borel measure $\mu$ on $(\R^n,\mathcal B(\R^n))$.
\end{proposition}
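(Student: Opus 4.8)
The plan is to combine the density of simple functions in the Orlicz heart with the regularity of locally finite Borel measures on $\R^n$. First I would record the containment $C_c(\R^n;\R^m)\subset M^\phi(\mu)$: any $g\in C_c(\R^n;\R^m)$ is bounded with compact support, so $\int_{\R^n}\phi(\alpha\|g\|)\dd\mu\le \mu(\operatorname{supp} g)\,\phi(\alpha\|g\|_\infty)<\infty$ for every $\alpha>0$, using that $\mu$ is locally finite and $\phi$ is finite-valued. By Remark~\ref{rem: M phi}~(ii) the simple functions are dense in $M^\phi(\mu)$, so by linearity it suffices to approximate a single indicator $\mathbf 1_B$ (componentwise) with $\mu(B)<\infty$; such indicators lie in $M^\phi(\mu)$ since $\int\phi(\alpha\mathbf 1_B)\dd\mu=\mu(B)\phi(\alpha)<\infty$ for all $\alpha>0$.

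Next I would reduce to a bounded set. Since $\mu(B)<\infty$, choosing $R$ large makes $\mu(B\setminus B_R)$ arbitrarily small, where $B_R$ denotes the open ball of radius $R$; then $\mathbf 1_B-\mathbf 1_{B\cap B_R}$ is an indicator supported on a set of small measure, hence of small gauge norm by the estimate carried out in the final step. Thus I may assume $B$ is bounded with finite measure. The core step then uses regularity: a locally finite Borel measure on $\R^n$ is Radon, so for any $\delta>0$ there exist a compact $F\subset B$ and a bounded open $U\supset B$ with $\mu(U\setminus F)<\delta$, where boundedness of $U$ is arranged by intersecting with a ball containing $B$. By Urysohn's lemma there is $g\in C_c(\R^n)$ with $0\le g\le 1$, $g\equiv 1$ on $F$, and $\operatorname{supp} g\subset U$. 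Consequently $\mathbf 1_B-g$ vanishes on $F$ and on $\R^n\setminus U$, takes values in $[-1,1]$, and is therefore supported in $U\setminus F$ with $\|\mathbf 1_B-g\|\le 1$.

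Finally I would estimate the gauge norm directly from its definition. Fix the target accuracy $\epsilon>0$; since $\phi$ is an $N$-function, $\phi(1/\epsilon)\in(0,\infty)$. Choosing $\delta\le 1/\phi(1/\epsilon)$ above yields
\begin{equation*}
  \int_{\R^n}\phi\Bigl(\frac{\|\mathbf 1_B-g\|}{\epsilon}\Bigr)\dd\mu
  =\int_{U\setminus F}\phi\Bigl(\frac{\|\mathbf 1_B-g\|}{\epsilon}\Bigr)\dd\mu
  \le \mu(U\setminus F)\,\phi\Bigl(\frac1\epsilon\Bigr)\le 1,
\end{equation*}
so that $N_{\phi,\mu}(\mathbf 1_B-g)\le\epsilon$ by definition of the gauge norm, which completes the approximation.

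The main obstacle is precisely this last translation: unlike in $L^p$, where one simply computes $\|\mathbf 1_B-g\|_p^p\le\mu(U\setminus F)$, the gauge norm is defined implicitly through $\phi$, so smallness must be extracted from the smallness of $\mu(U\setminus F)$ together with the finiteness of $\phi(1/\epsilon)$. This is exactly where the $N$-function hypothesis (finite-valued, continuous, vanishing only at $0$) and the Radon regularity of the locally finite measure $\mu$ enter crucially.
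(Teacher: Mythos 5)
Your proposal is correct and takes essentially the same route as the paper's proof: density of simple step functions in $M^\phi(\mu)$ via Remark~\ref{rem: M phi}~(ii), inner/outer regularity of the locally finite Borel measure, Urysohn's lemma, and a direct gauge-norm estimate obtained by choosing $\delta$ so small that $\mu(U\setminus F)\,\phi(1/\epsilon)\le 1$. The only differences are organizational --- you reduce to a single indicator by linearity of the norm, and your explicit reduction to a bounded set $B$ (intersecting $U$ with a ball so that the Urysohn function is genuinely compactly supported) is in fact slightly more careful than the paper, which tacitly assumes the outer-regular open sets $U_i$ fit inside a compact rectangle.
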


\begin{proof}
  For simplicity assume that $\R^{n}$ is endowed with the $\|\,\cdot\,\|_\infty$ norm and we write for notational convenience $\|\,\cdot\,\|$.

  Let $f\in M^\phi(\mu)$. As $\phi$ is an $N$-function, by Remark~\ref{rem: M phi}~(ii), $M^\phi(\mu)$ is the closure of simple step functions in $L^\phi(\mu)$ under the gauge norm $N_{\phi,\mu}$. Fix $\epsilon>0$ and let
  \begin{equation*}
    s(x):=\sum_{i=1}^kc_i\mathbf{1}_{E_i}(x),\quad x \in \R^n,
  \end{equation*}
  be a simple step function with each $c_i\in\R^m$, $\mu(E_i)<\infty$, and
  \begin{equation}
    N_{\phi,\mu}(f-s)\le \frac \epsilon 2. \label{eq: A1}
  \end{equation}
  Note that every Borel measure on $\R^n$ is regular, see e.g.~\cite[Theorem~7.1.7]{Bogachev2007}. By the inner-regularity of $\mu$, for each $i$, we can choose a compact set $K_i\subset E_i$ with
  \begin{equation*}
    \mu(E_i\setminus K_i)<\frac \delta 2,
  \end{equation*}
  and, by outer-regularity choose, an open set $U_i\supset E_i$ with
  \begin{equation*}
    \mu(U_i\setminus E_i)<\frac \delta 2,
  \end{equation*}
  where $\delta>0$ will be chosen sufficiently small later. Then, by Urysohn's Lemma (\cite[Lemma~4.32]{Folland1999}), there is a continuous function $u_i\colon\R^n\to[0,1]$ with
  \begin{equation*}
    u_i(x)=1\text{ for } x\in K_i
    \quad \text{and}\quad
    u_i(x)=0\text{ for }x\notin U_i.
  \end{equation*}
  Since each $u_i$ is continuous and vanishes outside $U_i$, the finite sum
  \begin{equation*}
    h(x):=\sum_{i=1}^{k}c_i\,u_i(x),\quad x\in \R^n,
  \end{equation*}
  is continuous and vanishes outside $\bigcup_{i=1}^k U_i.$ In particular, $\operatorname{supp}(h)\subset \bigcup_{i=1}^k U_i$.

  Finally, we choose $a_1,b_1,\ldots,a_n,b_n$ such that the rectangle $J$, defined by
  \begin{equation*}
    J:=[a_1,b_1]\times\cdots\times[a_n,b_n],
  \end{equation*}
  is large enough so that $\bigcup_{i=1}^k U_i\subset J$ and $\operatorname{supp}(h)\subset J$. Hence, $h(x)=\sum_{i=1}^kc_iu_i(x)$ is a continuous, compactly supported function and satisfies
  \begin{align*}
    \|s(x)-h(x)\|&=\Bigl\|\sum_{i=1}^{k}c_i\mathbf 1_{E_i}(x)-\sum_{i=1}^kc_iu_i(x)\Bigr\|\\
    &\le \sum_{i=1}^{k}\|c_i\||\mathbf 1_{E_i}(x)-u_i(x)|\\
    &\le \max_{i=1,\ldots,k}\|c_i\|\sum_{i=1}^{k}\mathbf 1_{U_i\setminus K_i}(x).
  \end{align*}
  Hence, for any $\lambda>0$ using the monotonicity of $\phi$, we have
  \begin{align*}
    \int_{\R^n}\phi\Bigl(\frac{\|s-h\|}{\lambda}\Bigr)\dd\mu&\le \int_{\R^n}\phi\Bigl(\frac{\max_{i=1,\ldots,k}\|c_i\|\sum_{i=1}^{k}\mathbf 1_{U_i\setminus K_i}}{\lambda}\Bigr)\dd\mu\\
    &\le \int_{\bigcup_{i=1}^{k} U_i\setminus K_i}\phi\Bigl(\frac{k\max_{i=1,\ldots,k}\|c_i\|}{\lambda}\Bigr)\dd\mu\\
    &=\mu\Bigl(\bigcup_{i=1}^{k} U_i\setminus K_i\Bigr)\phi\Bigl(\frac{k\max_{i=1,\ldots,k}\|c_i\|}{\lambda}\Bigr)\\
    &\le \sum_{i=1}^{k}\mu(U_i\setminus K_i)\phi\Bigl(\frac{k\max_{i=1,\ldots,k}\|c_i\|}{\lambda}\Bigr)\\
    &\le k\delta \phi\Bigl(\frac{k\max_{i=1,\ldots,k}\|c_i\|}{\lambda}\Bigr),
  \end{align*}
  where we used the inner- and outer-regularity of $\mu$, that is,
  \begin{equation*}
    \mu(U_i\setminus K_i)=\mu(U_i\setminus E_i)+\mu(E_i\setminus K_i)<\frac \delta 2+\frac \delta 2=\delta.
  \end{equation*}
  Now taking
  \begin{equation*}
    \delta\le \bigg(k\phi\bigg(k\frac{\max_i \|c_i\|}{\lambda}\bigg)\bigg)^{-1},
  \end{equation*}
  yields
  \begin{equation*}
    \int_{\R^n}\phi\Bigl(\frac{\|s-h\|}{\lambda}\Bigr)\dd\mu\le 1,
  \end{equation*}
  which by definition of the gauge norm yields that $N_{\phi,\mu}(s-h)\le \lambda$. Setting $\lambda:=\frac \epsilon 2$, we have
  \begin{equation}\label{eq: A2}
    N_{\phi,\mu}(s-h)\le \frac \epsilon 2.
  \end{equation}
  Putting \eqref{eq: A1} and \eqref{eq: A2} together, we have
  \begin{equation*}
    N_{\phi,\mu}(f-h)\le N_{\phi,\mu}(f-s)+N_{\phi,\mu}(s-h)\le \frac \epsilon 2+\frac \epsilon 2=\epsilon,
  \end{equation*}
  which proves the claim.
\end{proof}

Lastly, let us recall the De la Vall{\'e}e Poussin theorem in the context of Orlicz spaces, see e.g.~ \cite[Theorem~1.1]{Barcenas2009}.

\begin{theorem}[De la Vall{\'e}e Poussin's theorem]\label{thm: vallee}
  Let $(\Omega,\Sigma,\mu)$ be a finite measure space. 
  A set $A\subset L^1(\mu;\R^m)$ is uniformly integrable if and only if there is an $N$-function $\phi$ such that the set~$A$ is bounded in the Orlicz space $L^\phi(\mu;\R^m)$ with respect to the gauge norm $N_{\phi,\mu}$.
\end{theorem}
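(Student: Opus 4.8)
The plan is to prove the two implications separately: the sufficiency by a direct duality estimate built on the generalized Hölder inequality, and the necessity by the classical De la Vallée Poussin construction, adapted so that the resulting convex function is genuinely an $N$-function.

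For the sufficiency, suppose $\sup_{f\in A}N_{\phi,\mu}(f)=:C<\infty$, and let $\psi$ denote the complementary function of $\phi$, which is again an $N$-function. For a measurable set $E$, the function $\mathbf 1_E$ is bounded, hence lies in $L^\psi(\mu)$, and Proposition~\ref{prop: genHölder} applied to $\|f\|$ and $\mathbf 1_E$ gives
\begin{equation*}
  \int_E\|f\|\dd\mu\le 2N_{\phi,\mu}(f)\,N_{\psi,\mu}(\mathbf 1_E)\le 2C\,N_{\psi,\mu}(\mathbf 1_E).
\end{equation*}
A direct evaluation of the gauge norm yields $N_{\psi,\mu}(\mathbf 1_E)=1/\psi^{-1}(1/\mu(E))$ for $\mu(E)>0$, which tends to $0$ as $\mu(E)\to0$ because $\psi^{-1}(s)\to\infty$ as $s\to\infty$. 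Taking $E=\Omega$ shows that $A$ is bounded in $L^1(\mu;\R^m)$, and the vanishing of $N_{\psi,\mu}(\mathbf 1_E)$ yields uniform absolute continuity; on the finite measure space $(\Omega,\Sigma,\mu)$ these two properties together are equivalent to uniform integrability.

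For the necessity, I would use that, on a finite measure space, uniform integrability of $A$ is equivalent to $\lim_{R\to\infty}\sup_{f\in A}\int_{\{\|f\|>R\}}\|f\|\dd\mu=0$. Hence one can choose $1\le R_1<R_2<\cdots$ with $R_n\to\infty$ and $\sup_{f\in A}\int_{\{\|f\|>R_n\}}\|f\|\dd\mu\le2^{-n}$. Setting $\phi_0(t):=\sum_{n\ge1}(t-R_n)^+$ produces a nondecreasing convex function with $\phi_0(t)/t\to\infty$ as $t\to\infty$, and from $(\|f\|-R_n)^+\le\|f\|\mathbf 1_{\{\|f\|>R_n\}}$ one obtains the uniform bound $\sup_{f\in A}\int\phi_0(\|f\|)\dd\mu\le\sum_{n\ge1}2^{-n}=1$. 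Since $\phi_0$ vanishes on $[0,R_1]$ it is not yet an $N$-function; I would therefore replace its slope on $[0,R_1]$ by a continuous, strictly increasing slope rising from $0$, obtaining an $N$-function $\phi$ that agrees with $\phi_0$ for large arguments. The extra contribution is controlled by $\phi(R_1)\mu(\Omega)<\infty$ using finiteness of $\mu$, so $\sup_{f\in A}\int\phi(\|f\|)\dd\mu\le C'<\infty$; replacing $\phi$ by $\phi/C'$ (again an $N$-function) finally gives $\sup_{f\in A}N_{\phi,\mu}(f)\le1$, i.e.\ boundedness of $A$ in $L^\phi(\mu;\R^m)$.

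The main obstacle is the necessity direction, and specifically the tension between the defining conditions of an $N$-function and the uniform bound. The superlinear growth $\lim_{t\to\infty}\phi(t)/t=\infty$ is delivered exactly by uniform integrability through the rapidly escaping thresholds $R_n$, whereas the behaviour at the origin ($\phi(t)>0$ for $t>0$ together with $\lim_{t\to0}\phi(t)/t=0$) is not provided by the bare step-function construction and must be installed by a near-zero correction. Verifying that this correction preserves the uniform integral bound---which is where the finiteness of $\mu$ enters decisively---is the crux of the argument.
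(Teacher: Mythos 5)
The paper does not actually prove Theorem~\ref{thm: vallee}: it is imported from the literature (cited as \cite[Theorem~1.1]{Barcenas2009}), so there is no internal proof to compare against step by step. Your argument is, however, correct and self-contained, and it is the natural proof of this statement. The sufficiency direction is sound: the complementary function $\psi$ of an $N$-function is again an $N$-function, the identity $N_{\psi,\mu}(\mathbf 1_E)=1/\psi^{-1}(1/\mu(E))$ is the standard evaluation of the gauge norm of an indicator, and on a finite measure space $L^1$-boundedness together with uniform absolute continuity of the integrals is indeed equivalent to uniform integrability in the sense used here (and in \cite{Bogachev2007}). The necessity direction is the classical De la Vall\'ee Poussin construction, and you correctly identify the one genuine issue, namely that $\phi_0(t)=\sum_n (t-R_n)^+$ vanishes on $[0,R_1]$ and so fails to be an $N$-function; the near-zero correction, with the error absorbed via $\phi(R_1)\mu(\Omega)<\infty$, is exactly how this is repaired. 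Two small points of wording rather than substance: (a) the corrected $\phi$ cannot literally \emph{agree} with $\phi_0$ for large arguments while remaining convex; what your construction actually yields is $\phi(t)=\phi_0(t)+\phi(R_1)$ for $t\ge R_1$, i.e.\ agreement up to an additive constant, and this is precisely what your bound $\sup_{f\in A}\int\phi(\|f\|)\dd\mu\le 1+\phi(R_1)\mu(\Omega)$ relies on; (b) the final rescaling by $C'$ is legitimate (a positive multiple of an $N$-function is an $N$-function, and a modular bound by $1$ implies gauge norm at most $1$), though you could also avoid it: for $C'\ge 1$, convexity and $\phi(0)=0$ give $\int\phi(\|f\|/C')\dd\mu\le (1/C')\int\phi(\|f\|)\dd\mu\le 1$, hence $N_{\phi,\mu}(f)\le C'$ directly. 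What your proof buys over the paper's citation is transparency about where each hypothesis enters: finiteness of $\mu$ is used exactly twice, once to make indicators lie in $L^\psi(\mu)$ with small norm, and once to control the near-zero correction.
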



\newcommand{\etalchar}[1]{$^{#1}$}
\providecommand{\bysame}{\leavevmode\hbox to3em{\hrulefill}\thinspace}
\providecommand{\MR}{\relax\ifhmode\unskip\space\fi MR }
\providecommand{\MRhref}[2]{%
  \href{http://www.ams.org/mathscinet-getitem?mr=#1}{#2}
}
\providecommand{\href}[2]{#2}

\end{document}